\newcommand*\Let[2]{\State #1 $\gets$ #2}
\algrenewcommand\algorithmicrequire{\textbf{Input:}}
\algrenewcommand\algorithmicensure{\textbf{Output:}}
\algnewcommand\algorithmicinput{\textbf{INPUT:}}
\algnewcommand\INPUT{\item[\algorithmicinput]}
\tikzstyle{startstop} = [rectangle, rounded corners, minimum width=3cm, minimum height=1cm,text centered, draw=black]
\tikzstyle{process} = [rectangle, rounded corners,minimum width=3cm, minimum height=.8cm,, text centered, draw=black]
\tikzstyle{decision} = [diamond, minimum width=2cm, minimum height=1cm, text centered, draw=black]
\tikzstyle{arrow} = [thick,->,>=stealth]
\tikzstyle{blank} = [node distance=1cm]
\tikzstyle{block} = [rectangle, draw, fill=blue!20, text centered, rounded corners, minimum height=1em]
\tikzstyle{line} = [draw, -latex']
\tikzstyle{cloud} = [draw, ellipse,fill=red!20, node distance=4cm,minimum height=1em]
\title{\LARGE \bf
Reactive Integrated Mission and Motion Planning}
\author{Alireza~Partovi, Rafael~Rodrigues~da~Silva$^1$ and Hai~Lin
	\thanks{The partial support of the National Science Foundation (Grant No. CNS-1446288, ECCS-1253488, IIS-1724070) and of the Army Research Laboratory (Grant No. W911NF- 17-1-0072) is gratefully acknowledged.}
	\thanks{The authors are with the Department of Electrical Engineering, University of Notre Dame, Notre Dame,
		IN, 46556 USA. {\tt\small apartovi@nd.edu, rrodri17@nd.edu, hlin1@nd.edu}}
    \thanks{$^1$ The second author would like to appreciate the scholarship support by CAPES/BR, BEX 13242/13-0}}
\newtheorem{definition}{Definition}
\newtheorem{problem}{Problem}
\newtheorem{thm}{Theorem}
\newtheorem{prop}{Proposition}
\newtheorem{lemma}{Lemma}
\begin{document}
	
\maketitle
\thispagestyle{empty}
\pagestyle{empty}

\begin{abstract} 
   Correct-by-construction manipulation planning in a dynamic environment, where other agents can manipulate objects in the workspace, is a challenging problem. The tight coupling of actions and motions between agents and complexity of mission specifications makes the problem computationally intractable. 
   This paper presents a reactive integrated mission and motion planning for mobile-robot manipulator systems operating in a partially known environment. We introduce a multi-layered synergistic framework that receives high-level mission specifications expressed in linear temporal logic and generates dynamically-feasible and collision-free motion trajectories to achieve it. In the high-level layer, a mission planner constructs a  symbolic two-player game between the robots and their environment to synthesis a strategy that adapts to changes in the workspace imposed by other robots. A bilateral synergistic layer is developed to map the designed mission plan to an integrated task and motion planner, constructing a set of robot tasks to move the objects according to the mission strategy.   In the low-level planning stage,  verifiable motion controllers are designed that can be incrementally composed to guarantee a safe motion planning for each high-level induced task. 
The proposed framework is illustrated with a multi-robot warehouse example with the mission of moving objects to various locations.  
\end{abstract}

\section{Introduction}
In the recent years, there has been an increased interest in using autonomous manipulator robots in factory automation  \cite{wurman2008coordinating}. 
 In a warehouse, for instance,  manipulator mobile robots are tasked to perform sequences of pick-up/drop-off objects and meanwhile are expected to have a collision-free path planning in the operating workspace. Exhibiting such a complex behavior requires integration of a high-level mission planner to synthesize long-term strategies, and a motion planner to generate feasible motion trajectories concerning the robot's dynamic model and workspace structure.  
Seamlessly combining these layers, however, leads to challenging class of planning problems  \cite{cambon2009hybrid,he2015towards}. 

A key challenge is to adapt the task and motion planning with a change of the workspace structure. 
Transitionally, manipulation planning is performed in a highly structured environment that allows using pre-computed motion trajectories \cite{belta2007symbolic}. 
In the multi-robot manipulator systems, however, each robot may change the workspace structure by carrying the objects to different locations which may create infeasible motion trajectories and fails the pre-computed task plans.
The robots, therefore, at the high-level planning are required to fulfill the mission objective by reactively adapt the strategy concerning other robots behavior, and at the low-level motion planning, to ensure a collision-free movement in a dynamic workspace.

Another important challenge is the complexity of defining the mission objectives. In the multi-robot manipulation planning with a complex mission, including a detailed description of each robot's task in the mission specification can be tedious \cite{he2015towards}.
This demands a correct-by-construction design that the user only expresses the mission requirement over the objects' location of interest, and the robots synthesize the necessary detailed tasks and movements to fulfill it.
This paper aims to develop a  correct-by-construction integrated mission and motion planning framework for robotic manipulator system with mission objective expressed in linear temporal logic.

Integrated mission and motion planning in a dynamic environment are demonstrated in other mobile robot applications. Authors in \cite{kress2011correct,alonso2017reactive}, 
 develops a framework for sensor-based temporal logic motion planning where the reactive mission planner responds to changes in the environment.  However, they are limited to reach-avoid specifications.

Various works leveraged AI approaches for high-level task planning \cite{cambon2009hybrid,kaelbling2011hierarchical,kaelbling2013integrated,gharbi2015combining}. AsYmov \cite{cambon2009hybrid} combines Probabilistic Roadmaps Methods (PRM) with a heuristic-search task planner based on metric-FF. However, these heuristic search algorithms are limited, because it ignores the geometric details in the symbolic layer. Thus, Hierarchical Task Networks (HTN) is extended in several related approaches \cite{kaelbling2011hierarchical,kaelbling2013integrated,gharbi2015combining} with shared literals to control backtracking between task and motion layer. For example, Hierarchical Task in the Now (HTN) \cite{kaelbling2011hierarchical} introduces an aggressive hierarchical approach which takes long-term choices first and commits them in a top-down fashion to reduces the amount of search required. Nonetheless, these works do not have a reactive high-level planning to adapts with a dynamic environment.

Recently, there is an interest to combine formal approaches to the robot manipulation problem. 
In the work \cite{dantam2016incremental}, it is presented an approach which combines Satisfiability Modulo Theories (SMT) in the discrete layer with sampling-based motion planning to achieve probabilistic completeness. It proposes a notation-independent task language to incorporate action effects. In \cite{he2015towards}, the task is specified in LTL formulas allowing richer expressiveness. These methods also do not consider changes in the object locations due to actions from the environment. Furthermore, the collision-free aspect of motion planning for moving obstacles is not considered. These obstacles appear in heterogeneous multi-robotic systems since other robots can be seen as moving obstacles to each other.

In this paper, we leverage a broad range of methods from literature to develop a reactive integrated mission and motion planning framework for multi-robot manipulation tasks. Our proposed framework has a hierarchical structure that accepts an LTL mission specification over the desired location of objects.
At the high-level, the mission planner considers uncontrollable robots in the workspace as an adversary players and synthesizes a  strategy that adapts to changes in the workspace enforced by uncontrollable robots. 
In the synergistic layer, according to a mission road-map, a robot manipulation plan is automatically synthesized. It encodes an integrated task and motion planning to a Bounded Satisfiability Checking (BSC) \cite{pradella2013bounded} specified by Bounded Prefix LTL formulas which guarantee dynamically feasible trajectories in the known dynamic workspace. BSC models consist of temporal logic with arithmetic terms rather than transition systems; thus, it allows integration of geometric details. During execution of the synthesized tasks, we employ safe motion primitive verified in Differential Dynamic Logic (d$\mathcal{L}$) \cite{platzer2010logical}. d$\mathcal{L}$ language can model non-deterministic hybrid systems to guarantee safety formally, considering delays,  of local motion planning algorithms such that the robot does not actively collide with obstacles observed by the sensors. 

Our main contribution lies in two folds: adapting reactive synthesis in the mission planner that enables robots to respond to change of workspace enforced by uncontrollable agents such human or other uncontrollable robots;  designing a synergistic layer that automatically constructs robot manipulation tasks ensuring dynamically-feasible and collision-free motion trajectories.

The remaining of this paper is organized as follows. The background and preliminaries are introduced in Section \ref{sec:preliminaries} which follows with the problem formulation in Section \ref{SEC:problem_formulation}. Section \ref{sec:rec_motion_plan}  presents  the reactive integrated task and motion planning layer. The reactive mission planning is addressed in Section \ref{sec:reactive_mission_synth}.  A warehouse manipulation example is presented in Section \ref{sec:simulation}.
The paper is concluded in Section \ref{SEC:CON}.

\section{Preliminaries}\label{sec:preliminaries}
In this section, we introduce the preliminary terminology and notations that are used throughout this paper. 
 

\subsection{$K$-Bounded Prefix LTL with Arithmetic Temporal Terms (LTL$_K$)}\label{sec:LTL}

We express the specification of an integrated task and motion planning based on Counter Linear Temporal Logic Over Constraint System CLTLB($\mathcal{D}$) defined in \cite{bersani2010bounded} and Bounded Linear Temporal Logic defined in \cite{latvala2004simple}. This language is interpreted over arithmetic and boolean terms $\mathcal{S} := \mathcal{V} \cup \mathcal{Q}$, where $\mathcal{V}$ is a set of continuous variables such that a variable $v \subseteq \mathcal{V}$ at instant $k$ is real-valued $v^{[k]} \in \mathbb{R}$, or integer-valued $v^{[k]} \in \mathbb{N}$; and $\mathcal{Q}$ is a set of boolean state variables such that a state variable $q \subseteq \mathcal{Q}$ at instant $k$ is boolean-valued $q^{[k]} \in \{\perp, \top\}$. Furthermore, arithmetic terms $v \in \mathcal{V}$ are valuated using temporal operators, this valuation is named arithmetic temporal term $\varphi$ such that the valuation function $\mathcal{I} : \varphi \times [1..K] \mapsto \mathbb{R} \times \mathbb{N}$. For short, we name LTL$_{K}$ for the $K$-Bounded Prefix LTL with Arithmetic Temporal Terms. 

\begin{definition}[Arithmetic Temporal Term]\label{def:att}
	A $K$-bounded prefix arithmetic temporal term $\varphi$ is defined as: $\varphi ::= s \mid \bigcirc \varphi$, 	where $s \subseteq S$ such that $s \in \mathbb{R}$ or $s \in \mathbb{N}$; and $\bigcirc$ stands for the next operator, i.e. $\mathcal{I}(\bigcirc r, k) = r^{[k+1]}$.
\end{definition}

The terms are interpreted using a labeling function $\mathcal{L} : \mathcal{D}_{\mathcal{V}} \times \mathcal{D}_{\mathcal{Q}} \mapsto 2^{\Pi}$, where $\Pi$ is a finite set of atomic propositions. Each atomic proposition $\pi \in \Pi$ is associated to a boolean term or to a linear arithmetic predicate over arithmetic temporal terms. An atomic proposition $\pi \in \Pi$ associated to a boolean term $\llbracket \pi \rrbracket := \{ q \in Q \}$ holds true at instant $k$ if and only if $q^{[k]}$ holds true. On the other hand, an atomic proposition $\pi \in \Pi$ associated to a linear arithmetic predicate over arithmetic temporal terms $\llbracket \pi \rrbracket := \{ \boldsymbol{h}^\intercal \boldsymbol{\varphi} \bowtie c \}$ holds at instant $k$ true if and only if $\boldsymbol{h}^\intercal \mathcal{I}(\boldsymbol{\varphi}, k) \bowtie c$ holds true, where $\bowtie$ is a relation operator ($\bowtie \,\in \{\leq \mid < \mid = \mid > \mid \geq\}$), $\boldsymbol{\varphi}$ is a vector of $n$ arithmetic temporal terms such that $\mathcal{I}(\boldsymbol{\varphi}) \in \mathbb{R}^{n_\mathbb{R}} \times \mathbb{N}^{n_\mathbb{N}}$ and $ n = n_\mathbb{R} + n_\mathbb{N}$, $\boldsymbol{h} \in \mathbb{R}^{n}$ and $c \in \mathbb{R}$. Therefore, a LTL$_K$ formula is defined as below.

\begin{definition}[Formula]\label{def:formula}
	A LTL$_K$ formula $\phi$ is defined as: $\phi ::= \pi \mid \neg \phi \mid \phi_1 \wedge \phi_2 \mid \bigcirc \phi  \mid \phi_1 \mathbf{U} \phi_2$, where $\pi \in \Pi$ is a atomic proposition, $\bigcirc$ stands for usual next operator, and $\mathbf{U}$ stands for usual until operators.
\end{definition}

Each formula defines a set of infinite words $\boldsymbol{\rho}$ over $2^{\Pi}$, i.e. $\boldsymbol{\rho} \in (2^{\Pi})^\omega$ is an infinite word. A $K$-bounded prefix is a sequence $\boldsymbol{\xi}^K = \boldsymbol{s}^{[0]}\boldsymbol{s}^{[1]}\boldsymbol{s}^{[2]}\dots\boldsymbol{s}^{[K]}$ and its word is $\boldsymbol{\rho}^K = \mathcal{L}(\boldsymbol{s}^{0})\mathcal{L}(\boldsymbol{s}^{[1]})\mathcal{L}(\boldsymbol{s}^{[2]})\dots\mathcal{L}(\boldsymbol{s}^{[K]})$. We consider a $K$-bounded prefix fragment of LTL language, meaning that we check if a formula $\phi$ defines a $K$-bounded prefix. When this formula defines a $K$-bounded prefix at instant $k$, we denote $\boldsymbol{\rho}^K \vDash_k \phi$. If the infinite path $\boldsymbol{\rho}$ is $K$-loop, then $\boldsymbol{\rho}^K \vDash_0 \phi$ will imply that $\phi$ defines the infinite word $\boldsymbol{\rho}$ \cite{biere1999symbolic}. Thus, we enforce that $\boldsymbol{\rho}$ is a $K$ loop by assuming a loop in the last state $\boldsymbol{s}^{[K]}$ to define the following semantics.

\begin{definition}[Semantics] \label{def:ltlK_semantics}
	The semantics of a LTL$_K$ formula $\phi$ at an instant $k \in [0..K]$ is as follow: 
    \begin{itemize}
    \item $\boldsymbol{\rho}^K \vDash_k \pi \Longleftrightarrow \pi \in 2^\Pi \wedge \pi \in \mathcal{L}(\boldsymbol{v}^{[k]})$, 
    \item $\boldsymbol{\rho}^K \vDash_k \neg \phi \Longleftrightarrow \boldsymbol{\rho} \nvDash_k^K \phi$, 
    \item $\boldsymbol{\rho}^K \vDash_k \phi_1 \wedge \phi_2 \Longleftrightarrow \boldsymbol{\rho}  \vDash_k^K \phi_1 \wedge \boldsymbol{\rho}  \vDash_k^K \phi_2$, 
    \item $\boldsymbol{\rho}^K \vDash_k \bigcirc \phi \Longleftrightarrow \boldsymbol{\rho}  \vDash_k^K \phi \vee k \notin [1..K-1]$, 
    \item $\boldsymbol{\rho}^K \vDash_k \phi_1 \mathbf{U} \phi_2 \Longleftrightarrow \begin{cases}
		\exists i \in [k..K]: \boldsymbol{\rho}  \vDash_i^K \phi_2 \wedge \\
		\forall j \in [k..i-1] \; \boldsymbol{\rho} \vDash_j^K \phi_1
		\end{cases}$.
    \end{itemize}
	Remark: $\mathcal{I}(\bigcirc r, K) = r^{[K]}$.
\end{definition}

Based on the grammar in Def.\ref{def:formula}, it can also use others common abbreviations, including: standard boolean, such as $\perp$, $\top$, $\vee$ and $\rightarrow$, $\Diamond \phi$ that stands for $\top \mathbf{U} \phi$, and it means that $\phi$ eventually holds before the last instant (included), $\square \phi$ that stands for $\neg \Diamond \neg \phi$, and it means that $\phi$ always holds until the last instant, $Last$ that stands for $\bigcirc \perp$ holds true only at last instant $K$. 

LTL$_K$ formula is used to specify properties over finite runs. In order to define properties over infinite words, we use LTL formulas. The semantic of an LTL formula is similar to LTL$_K$ but for infinite bound. The language of an LTL formula $\psi$  over variable $\mathcal{V}$ is defined as $\mathcal{L}(\psi)=\{ \omega \in {(\mathcal{D}_\mathcal{V}})^\omega | \omega \models \phi \}$. For further details on LTL model checking and it's semantic, the reader can refer to \cite{clarke_model_1999}.

\section{Problem Formulation} \label{SEC:problem_formulation}
To motivate this work, we provide a warehouse robotic example that will be used through this paper.  
\subsection{Motivating Example}\label{SEC:motivating}

We consider fully actuated ground mobile robots and, in this example, we used a Dublin's vehicle model,
\begin{equation}\label{eq:rdm1}
\dot{\boldsymbol{x}} = f(\boldsymbol{x},\boldsymbol{u}),  \qquad \boldsymbol{y} = H \boldsymbol{x}, 
\end{equation}
where $\boldsymbol{x} = (p^x, p^y, \theta, v, \omega) \in \mathcal{D}_{\mathcal{X}} \subseteq \mathbb{R}^5$ is the state, $\boldsymbol{u} = (a,\alpha) \in \mathcal{D}_{\mathcal{U}} \subseteq \mathbb{R}^2$ is the control input, $f(\boldsymbol{x},\boldsymbol{u}) = (v\cos \theta, v\sin \theta, \omega, a, \alpha)$, $H = diag(1,1,1) \in \mathbb{R}^{3,5}$, and $\boldsymbol{y} \in \mathcal{D}_{\mathcal{Y}} \subseteq  \mathbb{R}^3$ is the output. The states describe the robot positions $\boldsymbol{p} = (p^x,p^y)$, orientation $\theta$ and velocities $\boldsymbol{v} = (v,\omega)$, and the outputs describe the robot configuration $\boldsymbol{y} = (p^x,p^y,\theta)$. The translational acceleration $a$ is bounded $a \in [-B,A]$, and the translational $v$ and angular $\omega$ are linked by the nonholomonic constraint $-\dot{p}^x \sin \theta + \dot{p}^y \cos \theta = 0$.
We call $\boldsymbol{x}(t)$ as the state trajectory and $\boldsymbol{y}(t)$ as the output trajectory such that the robot dynamic model is under the initial condition $\boldsymbol{x}(0) = \boldsymbol{x}_{0}$.
Now consider a warehouse layout depicted in Figure \ref{FIG:map}, with having manipulator robots $R_1$, $R_2$, and $R_3$. We consider robot $R_1$ as our controllable system and other robots as part of the environment. The user does not know precisely the layout map,  although, the following abstracted scenario features are known to him: the robots initial positions $R_1H$ and $R_2H$; the scanning area $W_1$ and $W_2$, where the objects are sorted and placed by robot $R_2$; and the transportation locations $W_3$ and $W_4$ where objects are supposed to be dropped off.  Robot $R_1$ mission is to move the unloaded objects to the transportation places $W_3$ and $W_4$, and meanwhile avoid collision with other robots and any unexpected moving obstacles. This manipulation mission, at the high-level requires an strategy to adapt  with robot $R_2$ behavior, and at the low-level, needs motion planning based on the warehouse layout and other moving robots.

\begin{figure}[!ht] 
	\centering
	\includegraphics[width=70mm,height=35mm]{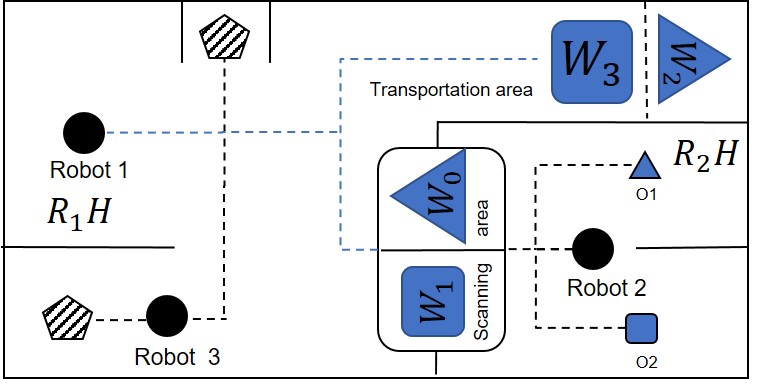}
	\caption{Warehouse layout and abstracted features.}
	
	
	\label{FIG:map}
\end{figure}

\subsection{Reactive Mission and Motion Planning}
We first, give a general overview of the integrated mission and motion planning framework depicted in Figure \ref{FIG:RITMP_framework}.  Consider  a group of manipulator  robots  $\mathcal{R} = \{R_1,\dots,R_{|R|}\}$ and a   workspace $\mathcal{W} \subset \mathbb{R}^2$ that contains a finite set of obstacles, and  movable objects. In the workspace there exist a set of locations of interest for the objects to be placed, and it is assumed each location is large enough to contain one and only one object. The mission requirement for the robots is to move the objects in the workspace to the desired locations.   We formalize this problem in a hierarchical structure with a reactive mission planner and an integrated task and motion planning layers.

The location of objects and robots positions constructs a  space $\mathcal{V}$ with a bounded domain $\mathcal{D}_\mathcal{V} $, that will be called task space in the rest of the paper.
 The objects and robots' location of interest  in the task space are abstracted   as discrete states in the mission planning layer. Let the set of all mission states be $\mathcal{M} = \{ \boldsymbol{m}_1 , \dots, \boldsymbol{m}_{|\mathcal{M}|} \} $, where $\boldsymbol{m}_i \in \mathcal{D}_\mathcal{V}$, and they are mutually exclusive $\bigcap_{i \in [1..|\mathcal{M}|]} \boldsymbol{m}_i = \emptyset$.  
This abstraction helps the user to be able to declaratively define the mission specification over location of objects. For instance, in the warehouse example, object $o_1$ eventually be at transportation area $W_2$.
The transition between these states is an integrated task and motion planning problem for a manipulator robot that can be designed and verified off-line.  We call these transitions, bounded-time tasks that are specified with LTL$_K$ formulas.   A bounded-time task can be seen as robot's skill on moving freely or carrying an object to a specific location in the workspace.  These skills are designed for a set of motion primitives that jointly satisfies the bounded-time tasks.   Motion primitives itself are reactive hybrid controllers that induce collision-free and dynamically-feasible motion trajectories in the described workspace, even when the robot meets an unexpected moving obstacles with maximum velocity $V_{\mathcal{B}^\prime}$. 

Considering $\mathcal{M}$ as a set of nodes and the transition between them as an edge set $\Gamma$, we can construct a  mission graph, capturing all feasible integrated task and motion planning for the given task space valuation, i.e $\mathcal{M}$. We consider a set of robots in the workspace as our controllable system and other robots as the system's  environment which their behaviors are uncontrollable but known to the system. 
The mission graph can be characterized as a turned-based two-player game arena with node set   $\mathcal{M}$ that is partitioned to system states $\mathcal{M}_s $, and environment mission states  $ \mathcal{M}_e$, meaning  the system takes actions at mission state  $\boldsymbol{m} \in \mathcal{M}_s$, and, otherwise, it is the environment robots turn.  
The edge set also is partitioned as $\Gamma = \Gamma_s \bigcup \Gamma_e$ representing the integrated task and motion planning specifications that can be realized by the system and its environment. With these ingredients, the reactive mission planning problem is defined as  synthesis of  system robots  winning strategy in the mission graph that satisfies the mission specification.

\begin{problem}[Reactive Mission and Motion Planning]\label{PROB:reactive_mimotion_planning}
	Given a high-level mission $\Psi$, a set of mission states $\mathcal{M}$, the behavior of the environment abstracted by $\mathcal{M}_e$ and $\Gamma_e$, a set of manipulator robots $\mathcal{R}$ with dynamic model (\ref{eq:rdm1}), an initial state for robot $\boldsymbol{x}_0$, a workspace $\mathcal{W}$, a set of static obstacles $\mathcal{CB}$, a maximum velocity for moving obstacles $V_{\mathcal{B}^\prime}$, synthesize a mission strategy which ensures $\Psi$ and yields trajectories that are collision-free and dynamical feasible in $\mathcal{W}$. 
\end{problem}

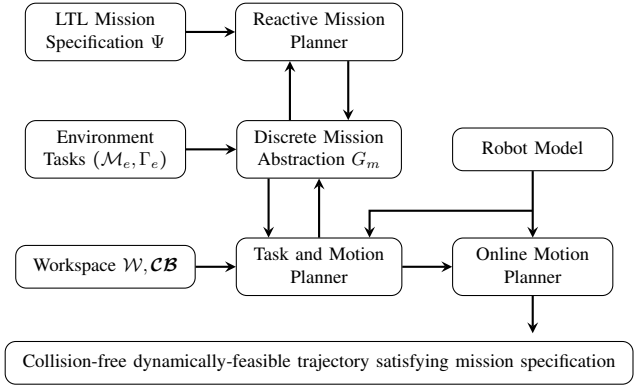
\begin{figure}[!ht]
	\begin{center}
		\begin{tikzpicture}[scale=.71,transform shape,node distance=1.5cm] `
		
		\node (MissionPlanner) [startstop,xshift=0cm] { \begin{tabular}{c}  Reactive Mission\\ Planner \end{tabular} };
		
		\node (AbsRef) [process, below of=MissionPlanner, xshift=0cm,yshift=-.7cm] {\begin{tabular}{c} Discrete Mission \\ Abstraction  $G_m $ \end{tabular} };
		
		\node (TaskMotion) [process, below of=AbsRef, xshift=0cm,yshift=-.7cm] {\begin{tabular}{c} Task and Motion \\ Planner \end{tabular} };	
		
		\node (MissionSpec) [process,  left of=MissionPlanner, xshift=-2.5cm,  yshift=.0cm] {\begin{tabular}{c} LTL Mission \\ Specification $\Psi$ \end{tabular}};
		
		\node (EnviromentTask) [process,  left of=AbsRef, xshift=-2.5cm,  yshift=-.0cm] {\begin{tabular}{c} Environment   \\ Tasks $(\mathcal{M}_e, \Gamma_e)$ \end{tabular}};
		
        \node (RobotModel) [process,  right of=AbsRef, xshift=2.5cm,  yshift=0.0cm] {\begin{tabular}{c} Robot   Model  \end{tabular}};
		
        
		\node (Workspace) [process,  left  of=TaskMotion, xshift=-2.50cm,  yshift=.0cm] {\begin{tabular}{c} Workspace $\mathcal{W},\boldsymbol{\mathcal{CB}}$  \end{tabular}};
		
		\node (OnlineMotionPlan) [process,  right of=TaskMotion, xshift=2.50cm,  yshift=.0cm] {\begin{tabular}{c} Online Motion \\ Planner   \end{tabular}};
		
		\node (Trajectory) [process,  below of=OnlineMotionPlan, xshift=-4cm,  yshift=-.3cm] {\begin{tabular}{c} Collision-free dynamically-feasible trajectory satisfying  mission specification\end{tabular}};	
        \coordinate[below of=RobotModel, xshift=.0cm,  yshift=0.35cm] (rmd);
		
		\draw [arrow] (MissionPlanner.315) -- (AbsRef.45);
		\draw [arrow] (AbsRef.135) -- (MissionPlanner.225);
		
		\draw [arrow] (TaskMotion) -- (AbsRef);
		\draw [arrow] (TaskMotion) -- (OnlineMotionPlan);
		\draw [arrow] (AbsRef.210) -- (TaskMotion.150);

		\draw [arrow] (MissionSpec) -- (MissionPlanner);
		\draw [arrow] (EnviromentTask) -- (AbsRef);
		
        \draw [arrow] (rmd) -| (TaskMotion.30); 
        \draw [arrow] (RobotModel) -- (OnlineMotionPlan); 
		\draw [arrow] (Workspace) -- (TaskMotion);
		\draw [arrow] (OnlineMotionPlan.south)  -- ++(.0,-.7)(Trajectory);
		
		\end{tikzpicture}
	\end{center}
	\caption{Integrated mission and motion planning framework.}
	\label{FIG:RITMP_framework}	
\end{figure}


\section{Reactive Task and Motion Planning} \label{sec:rec_motion_plan}

The transition between two mission states $\gamma_i \in \Gamma$ is implemented as a task and motion planning which must ensure collision-free, dynamically-feasible trajectories, and actions to manipulate objects. This task and motion planning considers the known workspace to synthesize a robust plan for a local motion planning which avoids collision with unexpected obstacles during runtime. Hence, we call it Reactive Integrated Task and Motion Planning (RITMP). 

The robots workspace $\mathcal{W}$ is a full-dimensional polytope defined in a two dimensional Euclidean space $\mathcal{W} \subset \mathbb{R}^2$ which specifies a fixed Cartesian frame where objects can move. A configuration space $\mathcal{Y}_i$ of a robot $\mathcal{R}_i$ is defined by $\boldsymbol{y}_i = (p_i^x, p_i^y,\theta_i)$, where $\boldsymbol{p} \in \mathbb{R}^2$ and $\theta \in (-\pi,\pi]$ are the robot position and orientation in the workspace, respectively. This workspace has a set of known static and rigid obstacles $\boldsymbol{B} = \{ \mathcal{B}_1,\dots,\mathcal{B}_{|\boldsymbol{B}|} \}$, which by considering a circular shaped robot, \textit{$\mathcal{C}$-obstacle} open convex polytopes regions $\mathcal{CB}_i \in \boldsymbol{\mathcal{CB}}$ can be precomputed such that $\mathcal{CB}_i = \{ \boldsymbol{y}_i \in \mathcal{Y}_i : \mathcal{R}_i(\boldsymbol{y}_i(t)) \cap \mathcal{B}_i = \emptyset \}$.
Besides moving, the robot can execute some tasks such as grasping and placing movable objects which changes the workspace. Task domains typically use a variety of notations defined over states and actions \cite{erdem2012answer,srivastava2014combined}. We model these actions into LTL$_K$ specifications using a notation-independent task domain.
\begin{definition}[Task Language]\label{def:tasklang}
	A task language is a set of strings of actions defined by the tuple $\mathcal{T} := \langle \mathcal{V}, \boldsymbol{A}, E, \boldsymbol{v}_0, \boldsymbol{v}_f \rangle$, where,
	\begin{itemize}
		\item $\mathcal{V}$ is a set of state variables $(v_1,\dots,v_{|\mathcal{V}|}) \in \mathcal{D}_\mathcal{V}$ which is the conjunction of robot configuration state space $(v_1,\dots,v_3) = (p^x,p^y,\theta) \in \mathcal{D}_\mathcal{Y}$ and non-motion state space $(v_3,\dots,v_{|\mathcal{V}|}) \in \mathcal{D}_{\mathcal{V}_t}$,
		\item $\boldsymbol{A}$ is a set of actions $a \in \boldsymbol{A}$,
		\item $E \subseteq (\mathcal{D}_{\mathcal{V}} \times \mathcal{D}_{\boldsymbol{A}} \times \mathcal{D}_{\mathcal{V}})$ is the set of \textit{symbolic transitions} $e \in E$ denoting $\text{pre}(a) \xrightarrow{a} \text{eff}(a)$, where $a \in \boldsymbol{A}$ is the operator, $\text{pre}(a), \text{eff}(a) \subseteq \mathcal{D}_{\mathcal{V}}$ are the precondition and effect set. 
        We specify $\text{pre}(a)$ and $\text{eff}(a)$ at instant $k$ by linear arithmetic predicates $\psi_{pre,a}(\boldsymbol{v}^{[k]})$ and $\psi_{eff,a}(\boldsymbol{v}^{[k]},\boldsymbol{v}^{[k+1]})$, respectively, such that $\boldsymbol{v}^{[k]} \in \text{pre}(a) \equiv \psi_{pre,a}(\boldsymbol{v}^{[k]})$ and $\boldsymbol{v}^{[k+1]} \in \text{eff}(a) \equiv \psi_{eff,a}(\boldsymbol{v}^{[k]},\boldsymbol{v}^{[k+1]})$,
		\item $\boldsymbol{v}_0, \boldsymbol{v}_f \in \mathcal{D}_{\mathcal{V}}$ is the initial and final condition of the task state variables.
	\end{itemize}
\end{definition}

We study the problem of synthesizing a hybrid control system which generates collision-free output trajectories of (\ref{eq:rdm1}) to guarantee task requirements $\mathcal{T}$ in a partially known workspace  with moving obstacles. 
\begin{definition}[Collision-free Dynamically Feasible Trajectory]\label{Def:coldyntraj}
	A robot output trajectory $\boldsymbol{y}_i(t)$ is called collision-free dynamically feasible trajectory if the following conditions hold.
	\begin{enumerate}		
    	\item It does not leave the workspace: $\boldsymbol{y}_i(t) \in \mathcal{W}$,
		\item It does not collide with known static obstacles $\boldsymbol{\mathcal{B}}$: $\mathcal{R}_i(\boldsymbol{y}_i(t)) \cap \boldsymbol{\mathcal{B}} = \emptyset$;
		\item It does not collide with any unexpected obstacle $\mathcal{B}^\prime$ moving with velocity up to $V_{\mathcal{B}^\prime}$ detected by sensors: $\mathcal{R}_i(\boldsymbol{y}_i(t)) \cap \mathcal{B}^\prime = \emptyset$,
        \item It is dynamically feasible: $\forall t \in \mathbb{R}_{\geq 0}$ there exist $\boldsymbol{u}(t) \in \mathcal{D}_{\mathcal{U}}$ such that $\boldsymbol{\dot{x}} = f(\boldsymbol(x),\boldsymbol{u})$ and $\boldsymbol{x}(0) = \boldsymbol{x}_0$.
	\end{enumerate}
\end{definition}

A valid task plan should generates a continuous output trajectory $\boldsymbol{y}_i(t)$ which is feasible in the dynamic model (\ref{eq:rdm1}) and does not collide with any obstacle or other agent in the workspace.
\begin{definition}[Valid Task Plan]\label{Def:validtaskplan}
	A task plan $\boldsymbol{T}_i = \{ a^{[1]},\dots,a^{[K]} \}$ for the robot $R_i \in \mathcal{R}$ is valid if the following conditions hold.
	\begin{enumerate}		
		\item Any robot trajectory $\boldsymbol{y}_i(t)$ generated with local motion planning following the plan $\boldsymbol{T}$ is a collision-free dynamically feasible trajectory,
		\item The plan $\boldsymbol{T}_i$ satisfies the task language $\mathcal{T}_i$, i.e. $\boldsymbol{v}^{[0]} = \boldsymbol{v}_0$, $\boldsymbol{v}^{[K]} = \boldsymbol{v}_f$, $\psi_{pre,a^{[k]}}(\boldsymbol{v}^{[k]})$ and $\psi_{eff,a^{[k]}}(\boldsymbol{v}^{[k]},\boldsymbol{v}^{[k+1]})$.
	\end{enumerate}
\end{definition}



The integrated task and motion planning is solved hierarchically with two layers: offline discrete planning and continuous planning.
First, the discrete planning finds an integrated task and motion plan $\boldsymbol{\xi}^K := \langle \boldsymbol{T}, \boldsymbol{Y}, \boldsymbol{P} \rangle$ for the known obstacles, including the objects placed in workspace, where $\boldsymbol{T} = \{ a^{[1]},\dots,a^{[K]} \}$ is a task plan, $\boldsymbol{Y}$ and $\boldsymbol{\mathcal{P}}$ forms a motion plan composed by an ordered set of target robot configuration $\boldsymbol{Y} := \{\boldsymbol{y}^{[1]},\dots,\boldsymbol{y}^{[K]} \}$ and a collision free tunnel in the workspace $\boldsymbol{\mathcal{P}} := \{\mathcal{P}^{[1]},\dots,\mathcal{P}^{[K]} \}$. A tunnel $\boldsymbol{\mathcal{P}}$ is a sequence of open and bounded convex polytopes $\mathcal{P}^{[k]} \in \boldsymbol{\mathcal{P}}$ which there exist at least one feasible trajectory $\boldsymbol{x}(t)$ to the target $\boldsymbol{y}^{[k]}$ generated by the action $a^{[k]}$. We encode the discrete problem with LTL$_K$ formulas that are solvable by an SMT solver such as Z3 \cite{de2008z3} using linear arithmetic constraints. Second, a continuous trajectory $\boldsymbol{x}(t)$ is generated for this tunnel using a local motion planning such that $\boldsymbol{y}(t)$ is passive safe for dynamic environment with moving obstacles, meaning that the robot stops before a collision happens. This safety property is verified offline and abstracted to the discrete planner using safety verification with Differential Dynamic Logic d$\mathcal{L}$ \cite{platzer2010logical}. In the following subsections we will present the discrete and continuous planning methods.


\subsection{Motion Primitives}\label{sec:conplan}

We aim to address the extreme computation of combining logic with dynamical constraints via compositional verification and synthesis. The Continuous Planner consists of a library of certified, reactive motion primitives in the form of a tunnel $\boldsymbol{\mathcal{P}}$ in the workspace and target configurations $\boldsymbol{Y}$ in the robot output space. These primitives are certified and reactive as it provides an online collision avoidance algorithm by taking into account sensor readings when synthesizing continuous trajectories. For a robot with dynamic model (\ref{eq:rdm1}), a passive safety property can be proved using Differential Dynamic Logic (d$\mathcal{L}$) \cite{mitsch2016formal}, which can be implemented with local motion planning such as Dynamic Window Approach (DWA) \cite{fox1997dynamic}. Furthermore, all free configuration space $\mathcal{D}_\mathcal{Y}$ is feasible for this robot, and a local optimization can find a feasible trajectory in a convex polytope.

\begin{lemma}
	There exist a collision-free trajectory between an initial $\boldsymbol{y}_0$ to a target configuration $\boldsymbol{y}^*$ inside a convex polytope $\mathcal{P}$ using Dynamic Window Approach (DWA) if the following conditions hold true:
	\begin{itemize}
		\item $\mathcal{P}$ is obstacle free, or there exist moving obstacles inside this polytope which will eventually move away from the robot path,		
		\item the initial state $\boldsymbol{x}_0$ is safe, meaning that the robot can stop before a collision,
		\item the initial configuration $\boldsymbol{y}_0$ is inside $\mathcal{P}$.
	\end{itemize}
\end{lemma}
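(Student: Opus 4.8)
The plan is to establish this existence result by combining two essentially independent facts: (i) the passive-safety guarantee of DWA verified in $d\mathcal{L}$, which ensures that the robot never actively collides with an obstacle, and (ii) the fact that DWA acts as a reactive attractor toward the target configuration inside an obstacle-free convex region. The three hypotheses are precisely the ingredients needed to invoke each of these facts without gaps, so the proof is really a matter of stitching them together and arguing progress.

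**First I would** set up the trajectory. By the third hypothesis, $\boldsymbol{y}_0 \in \mathcal{P}$, and since $\mathcal{P}$ is an open bounded convex polytope in the free configuration space $\mathcal{D}_{\mathcal{Y}}$, every point of $\mathcal{P}$ is a feasible robot configuration (this is stated in the paragraph preceding the lemma, ``all free configuration space $\mathcal{D}_{\mathcal{Y}}$ is feasible for this robot''). The straight-line segment from $\boldsymbol{y}_0$ to $\boldsymbol{y}^*$ lies entirely in $\mathcal{P}$ by convexity, giving a candidate geometric path; the remaining work is to show DWA can track a dynamically-feasible trajectory $\boldsymbol{x}(t)$ realizing this motion under the nonholonomic and acceleration-bounded dynamics of (\ref{eq:rdm1}).

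**Next I would** invoke safety. The second hypothesis states the initial state $\boldsymbol{x}_0$ is safe in the sense that the robot can stop before a collision. This is exactly the invariant that the $d\mathcal{L}$ passive-safety proof of \cite{mitsch2016formal} maintains along the DWA-controlled flow: if the robot starts in a state from which braking avoids collision, then DWA keeps it in such a state for all future time. I would cite this verified property to conclude that $\mathcal{R}_i(\boldsymbol{y}(t)) \cap \mathcal{B}' = \emptyset$ holds for every detected moving obstacle. Combined with the first hypothesis — that $\mathcal{P}$ is either obstacle-free or any intruding obstacle eventually vacates the robot's path — this rules out a permanent deadlock: either the interior of $\mathcal{P}$ is clear so DWA's local optimization finds a feasible trajectory directly, or the transient obstacle clears and DWA resumes progress toward $\boldsymbol{y}^*$ once the path is free.

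**The hard part will be** arguing actual convergence to $\boldsymbol{y}^*$ rather than mere collision-avoidance, since passive safety alone is compatible with the robot stopping short forever. I expect to handle this by appealing to the cost/objective structure of DWA: within an obstacle-free convex polytope the heading-and-distance terms of the DWA objective make $\boldsymbol{y}^*$ the unique attractor, and because $\mathcal{P}$ is convex no local minimum can trap the robot away from the target. The transient-obstacle case then reduces to the obstacle-free case after the finite time at which the obstacle has ``eventually moved away,'' as assumed. I would keep this argument at the level of invoking the verified DWA properties rather than re-deriving the control law, so the proof stays short and leans on the cited $d\mathcal{L}$ and DWA results.
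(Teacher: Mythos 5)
Your proposal has a genuine gap at exactly the point you flag as ``the hard part.'' The convergence-to-target step is carried by an appeal to verified DWA properties --- a claimed attractor structure of the DWA objective with no local minima inside a convex polytope --- but no such liveness property exists in the cited literature: the d$\mathcal{L}$ result of \cite{mitsch2016formal} establishes \emph{passive safety} only (the robot stops before colliding), which, as you yourself concede, is compatible with the robot stopping short of $\boldsymbol{y}^*$ forever. The paper closes this gap with a different and more concrete tool: the model (\ref{eq:rdm1}) can be feedback linearized into a double integrator $\ddot{\tilde{\boldsymbol{x}}} = \tilde{\boldsymbol{u}}$ (citing \cite{oriolo2002wmr}), so that reaching any \emph{position} inside an obstacle-free convex polytope reduces to local optimization for a trivially controllable linear system, and the \emph{orientation} component of $\boldsymbol{y}^*$ is obtained separately by rotating in place once the robot is stopped. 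Your argument supplies no equivalent reachability mechanism, so the existence claim --- the entire content of the lemma --- is asserted rather than proved.

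A second, related defect is your candidate geometric path: the straight segment from $\boldsymbol{y}_0$ to $\boldsymbol{y}^*$ in configuration space is not trackable under the nonholonomic constraint $-\dot{p}^x \sin\theta + \dot{p}^y \cos\theta = 0$, because along such a segment the orientation coordinate varies independently of the direction of translational motion. The paper's decomposition --- feedback-linearized position control followed by rotation in place --- is precisely what sidesteps this, and it is the ingredient you would need to import for your argument to go through.
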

\begin{proof}
	The system (\ref{eq:rdm1}) can be feedback linearized as a double integrator $\ddot{\tilde{\boldsymbol{x}}} = \tilde{\boldsymbol{u}}$ \cite{oriolo2002wmr}; thus, a trajectory to any position $(p^x,p^y)$ inside an obstacle free polytope $\mathcal{P}$ can be found by local optimization. Furthermore, if the robot is stopped, it can rotate to any orientation $\theta$. Therefore, if the initial state $\boldsymbol{x}_0$ is safe, and the initial $\boldsymbol{y}_0$ and the target configuration $\boldsymbol{y}^*$ are inside an obstacle free convex polytope $\mathcal{P}$, then the DWA can find a feasible trajectory.
\end{proof}

We ensure that a non-convex tunnel $\boldsymbol{\mathcal{P}}$ of obstacle free configuration space is feasible using DWA by requiring that this tunnel is composed by a sequence of convex polytopes $\boldsymbol{\mathcal{P}} := \{\mathcal{P}^{[1]},\dots,\mathcal{P}^{[K]} \}$ such that $\mathcal{P}^{[k]} \cap \mathcal{P}^{[k+1]} \neq \emptyset$. This condition can be specified in the following LTL$_K$ formula.
\begin{equation}\label{eq:safe}
\begin{aligned}
\phi_{safe}(\mathcal{W}, \boldsymbol{\mathcal{CB}}) \equiv \square \Big[ & \bigwedge_{i \in [1..n_F^w]} w_i \wedge \\
& \bigwedge_{i \in [1..|\mathcal{CB}|]} \bigvee_{j \in [1..n_{F}^{b,i}]} \big(b_{i,j} \wedge \bigcirc b_{i,j}\big) \Big],
\end{aligned}
\end{equation}
where $w_i \equiv \boldsymbol{h}_{w,i}^\intercal [p^x,p^y]^\intercal \leq c_{w,i}$, $b_{i,j} \equiv \boldsymbol{h}_{b,i,j}^\intercal [p^x,p^y]^\intercal \leq c_{b,i,j}$, $\boldsymbol{h}_{w,i} \in \mathbb{R}^2$ and $c_{w,i} \in \mathbb{R}$ are facet $i \in [1..n_F^w]$ parameters of the polytope $\mathcal{W}$, and $\boldsymbol{h}_{b,i,j} \in \mathbb{R}^2$ and $c_{b,i,j} \in \mathbb{R}$ are facet $j \in [1..n_F^{b,i}]$ parameters of the polytope $\mathcal{CB}_i$.

\begin{lemma}
	If a task and motion plan $\boldsymbol{\xi}^K$ satisfies $\phi_{safe}$, it will generate a collision free trajectory $\boldsymbol{y}(t)$ for a given workspace $\mathcal{W}$ with known static obstacles $\mathcal{B}$. 
\end{lemma}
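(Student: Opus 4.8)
The plan is to reduce the claim to conditions~(1) and~(2) of Definition~\ref{Def:coldyntraj} --- that the trajectory stays inside $\mathcal{W}$ and avoids the static $\mathcal{C}$-obstacles $\boldsymbol{\mathcal{CB}}$ --- and to argue that $\phi_{safe}$ enforces these not merely at the discrete waypoints but along the whole continuous motion. First I would unfold the $\square$ operator: since $\boldsymbol{\xi}^K \vDash \phi_{safe}$, the bracketed conjunction holds at every instant $k \in [0..K]$. At each such $k$ the conjunct $\bigwedge_i w_i$ certifies that the waypoint position $(p^x,p^y)^{[k]}$ satisfies every facet inequality of $\mathcal{W}$, hence $\boldsymbol{p}^{[k]} \in \mathcal{W}$; and for each obstacle index $i$ the disjunction $\bigvee_j (b_{i,j} \wedge \bigcirc b_{i,j})$ selects a facet $j_i(k)$ of $\mathcal{CB}_i$ whose closed half-space $b_{i,j_i(k)}$ is satisfied. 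Because $\mathcal{CB}_i$ is the \emph{open} convex collision region lying in the complementary open half-spaces, satisfying one $b_{i,j}$ already places $\boldsymbol{p}^{[k]}$ outside $\mathcal{CB}_i$, so every waypoint is collision-free.

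The crux is lifting this from waypoints to the continuous trajectory, and this is where the $\bigcirc$ conjunct does the work. For a fixed segment $k$, the requirement $b_{i,j} \wedge \bigcirc b_{i,j}$ forces the \emph{same} facet index $j_i$ to be satisfied at both endpoints $\boldsymbol{p}^{[k]}$ and $\boldsymbol{p}^{[k+1]}$. I would then form the convex region
\[ \mathcal{Q}^{[k]} := \{\, \boldsymbol{p} : \bigwedge_{i} w_i \wedge \bigwedge_{i} b_{i,j_i} \,\} \]
obtained by intersecting the workspace half-spaces with one collision-free half-space per obstacle. It is convex by construction, satisfies $\mathcal{Q}^{[k]} \subseteq \mathcal{W}$, and is disjoint from every $\mathcal{CB}_i$ simply because $\mathcal{Q}^{[k]}$ lies in the closed half-space $b_{i,j_i}$ while $\mathcal{CB}_i$ lies in its open complement. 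Both endpoints lie in $\mathcal{Q}^{[k]}$, and I would identify the tunnel polytope $\mathcal{P}^{[k]}$ with (or as a subset of) $\mathcal{Q}^{[k]}$; convexity of the half-spaces then guarantees that any trajectory the local planner keeps inside $\mathcal{P}^{[k]}$ never leaves $\mathcal{W}$ and never enters any $\mathcal{CB}_i$.

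Finally I would invoke the preceding DWA lemma: inside each obstacle-free convex polytope $\mathcal{P}^{[k]}$ the Dynamic Window Approach produces a feasible trajectory to the target $\boldsymbol{y}^{[k]}$, and the tunnel condition $\mathcal{P}^{[k]} \cap \mathcal{P}^{[k+1]} \neq \emptyset$ lets these segments be concatenated into a single continuous $\boldsymbol{y}(t)$. Since every segment is confined to a region that is contained in $\mathcal{W}$ and disjoint from $\boldsymbol{\mathcal{CB}}$, the full trajectory satisfies conditions~(1) and~(2), i.e. $\boldsymbol{y}(t) \in \mathcal{W}$ and $\mathcal{R}(\boldsymbol{y}(t)) \cap \boldsymbol{\mathcal{B}} = \emptyset$, which is exactly the asserted collision-freeness with respect to the known static obstacles.

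The main obstacle I anticipate is precisely the synchronization argument: making rigorous that the $\bigcirc$ operator pins the per-obstacle facet choice $j_i$ to be consistent across consecutive instants, so that the whole segment --- and not just its two endpoints --- is enclosed in one convex safe region. The geometric certification itself is light (two complementary half-spaces of a single facet are trivially disjoint), but the bookkeeping that ties the abstract discrete satisfaction of $\phi_{safe}$ to the concrete tunnel polytopes $\mathcal{P}^{[k]}$ supplied by the plan is where care is needed. I would also treat the terminal instant $k=K$ separately, since the semantics collapse $\bigcirc$ onto the last state there, so the final segment must be argued directly rather than through a successor waypoint.
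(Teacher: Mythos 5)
Your proposal is correct and follows essentially the same route as the paper's own proof: containment in $\mathcal{W}$ via the $w_i$ conjuncts, separation from each $\mathcal{CB}_i$ by one active facet half-space, the $\bigcirc b_{i,j}$ conjunct pinning that same facet at consecutive waypoints so both endpoints lie in a common convex obstacle-free polytope (whence consecutive tunnel polytopes intersect), and the DWA lemma to realize the continuous trajectory. Your write-up is in fact more careful than the paper's terse argument, notably in making the facet-synchronization step explicit and flagging the $k=K$ boundary case, but it is the same decomposition and the same key ideas.
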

\begin{proof}
	A polytope $\mathcal{P}$ is obstacle free inside the workspace $\mathcal{W}$ if it is inside of at least one of the half-plane of each $\mathcal{C}$-obstacle region $\mathcal{CB}_i \in \mathcal{CB}$. Furthermore, two sequent polytopes $\mathcal{P}^{[l]}$ and $\mathcal{P}^{[l+1]}$ intersects with each other ($\mathcal{P}^{[l]} \cap \mathcal{P}^{[l+1]} \neq \emptyset$) if and only if there exist a point that is inside both polytopes. Therefore, a obstacle free polytope is formed by one of active hyper-plane (a hyper plan which $b_{i,j}$ holds true)	for each obstacle $\mathcal{CB}_i$ and the workspace hyper-planes. Moreover, sequent polytopes has non-empty intersection because for each active obstacle hyper-plan is active for next robot position, i.e. $\big(b_{i,j} \wedge \bigcirc b_{i,j}\big)$. 
\end{proof}
Local motion planning, such as DWA, not only efficiently computes trajectories, but  also is robust to unknown obstacles. However, dynamic obstacles can turn these trajectories unsafe before the robot can react due delays such as sensors sampling time and computation. Hence, we design a minimally invasive safety supervisor which ensure that the robot stops before the collision occurs. This supervisor gets data from radar sensor to detect when the collision is imminent to take over and stop the robot. A similar strategy is used in other collision avoidance controllers in the literature \cite{borrmann2015control}. 

\begin{definition}[Safe Motion Primitive]
	A safe motion primitive is hybrid control system with two modes of operation: inactive and override. The dynamic model is defined in (\ref{eq:rdm1}) for a circular shaped robot with radius $D_s$. The moving obstacle $\mathcal{B}^\prime$ dynamic model is unknown, but its maximum velocity $\|\boldsymbol{v}_{\mathcal{B}^\prime}\| \leq V_{\mathcal{B}^\prime}$ is known, and its closest point $\boldsymbol{p}_{\mathcal{B}^\prime}^* \in \mathcal{B}^\prime$ to the robot is detected by a radar sensor. In the drive mode, an over-the-shelf local motion planning generates the robot trajectory when $safe \equiv \|\boldsymbol{p}-\boldsymbol{p}_{\mathcal{B}^\prime}^*\|_{\infty} > \frac{v^2}{2B} + V_{\mathcal{B}^\prime}\big(\epsilon + \frac{v+A\epsilon}{B}\big) + \big(\frac{A}{B}+1\big)\big(\frac{A}{2}\epsilon^2 + \epsilon v\big) + D_s$ holds true, where $\epsilon$ is the maximum reaction delay. Otherwise, the mode override stops the robot ($a = -A$).
\end{definition}

This supervisor is synthesized offline and abstracted to the high-level discrete planner using Differential Dynamic Logic d$\mathcal{L}$ \cite{platzer2010logical} verification. 

\begin{lemma}
	Any state trajectory $\boldsymbol{x}(t)$ of a Safe Motion Primitive stops before the collision, i.e. $\phi_{pf} \equiv (v=0) \vee \big(\|\boldsymbol{p} - \boldsymbol{p}_{\mathcal{B}^\prime}^*\| > \frac{v^2}{2B}+V_{\mathcal{B}^\prime}\frac{v}{B}+D_s\big)$, if $\phi_{pf}$ holds true initially. 
\end{lemma}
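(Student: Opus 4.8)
The plan is to establish that $\phi_{pf}$ is an invariant of the hybrid control system defining the Safe Motion Primitive, following the differential-dynamic-logic passive-safety template of \cite{mitsch2016formal}. First I would model the primitive as a d$\mathcal{L}$ hybrid program that repeats the loop: (i) read the sensed closest point $\boldsymbol{p}_{\mathcal{B}^\prime}^*$; (ii) make the control choice --- in drive mode, when $safe$ holds, nondeterministically pick any admissible $a \in [-B,A]$, otherwise in override apply maximal braking $a = -B$ (consistent with the braking distance $\frac{v^2}{2B}$); (iii) let the adversary pick any obstacle velocity $\boldsymbol{v}_{\mathcal{B}^\prime}$ with $\|\boldsymbol{v}_{\mathcal{B}^\prime}\| \le V_{\mathcal{B}^\prime}$; and (iv) evolve $\dot{\boldsymbol{x}} = f(\boldsymbol{x},\boldsymbol{u})$ together with $\dot{\boldsymbol{p}}_{\mathcal{B}^\prime}^* = \boldsymbol{v}_{\mathcal{B}^\prime}$ under the domain $v \ge 0$ for a duration of at most the reaction delay $\epsilon$. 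The claim reduces to $\phi_{pf} \rightarrow [\text{loop}]\,\phi_{pf}$, which I would discharge by the loop-invariant rule using $\phi_{pf}$ itself; the base case is the hypothesis that $\phi_{pf}$ holds initially.

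The geometric core, and the step I expect to be the main obstacle, is reducing the planar nonholonomic motion to a one-dimensional closing-distance bound. Writing $d = \|\boldsymbol{p} - \boldsymbol{p}_{\mathcal{B}^\prime}^*\|$ and using $\|\dot{\boldsymbol{p}}\| = \sqrt{(v\cos\theta)^2 + (v\sin\theta)^2} = |v|$, a Cauchy--Schwarz estimate gives $\dot{d} = \frac{(\boldsymbol{p}-\boldsymbol{p}_{\mathcal{B}^\prime}^*)^\intercal(\dot{\boldsymbol{p}} - \boldsymbol{v}_{\mathcal{B}^\prime})}{d} \ge -(v + V_{\mathcal{B}^\prime})$, so that neither turning nor the direction chosen by the adversary can make the robot approach the obstacle faster than its own speed plus $V_{\mathcal{B}^\prime}$. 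This bound lets the one-dimensional braking argument dominate every admissible planar trajectory, and it also justifies that checking the stronger $\|\cdot\|_\infty$ guard soundly implies the Euclidean separation required by $\phi_{pf}$.

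For the inductive step I would split on the two control branches. In the \emph{override} branch I would use a differential invariant: setting $M := d - \frac{v^2}{2B} - V_{\mathcal{B}^\prime}\frac{v}{B} - D_s$ and differentiating along $\dot{v} = -B$ yields $\dot{M} = \dot{d} + v + V_{\mathcal{B}^\prime} \ge 0$ by the closing-distance bound, so $M$ cannot decrease and $\phi_{pf}$ (that is, $v = 0 \vee M > 0$) is preserved until the robot stops. In the \emph{drive} branch the guard $safe$ held at the sensing instant, and the worst case is maximal acceleration $a = A$ straight toward a maximally approaching obstacle: after the delay the velocity is $v + A\epsilon$ and the separation has dropped by at most $v\epsilon + \frac{A}{2}\epsilon^2 + V_{\mathcal{B}^\prime}\epsilon$. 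Substituting these into the $\phi_{pf}$ threshold evaluated at velocity $v + A\epsilon$ and expanding $\frac{(v+A\epsilon)^2}{2B}$ reproduces \emph{exactly} the right-hand side of the $safe$ predicate, so $safe$ is precisely the condition guaranteeing $\phi_{pf}$ at the end of the delay; since the analogous worst-case separation requirement at any intermediate instant $t \in [0,\epsilon]$ grows monotonically in $t$, the binding constraint is the one at $t = \epsilon$ and $\phi_{pf}$ holds throughout the cycle. Combining the branches closes the loop induction, so $\phi_{pf}$ is a global invariant and the robot is always able to --- and under override does --- stop before any collision. What remains is the routine real-arithmetic bookkeeping certifying the branch inequalities, which is mechanizable by quantifier elimination in KeYmaera X \cite{platzer2010logical}.
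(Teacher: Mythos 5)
Your proof is correct, but it takes a genuinely different route: you reconstruct the entire passive-safety argument, whereas the paper outsources it. The paper's proof is two sentences long: it cites \cite{mitsch2016formal}, where the invariance of $\phi_{pf}$ under exactly this control policy ($a \in [-B,A]$ while $safe$ holds, else maximal braking $a=-B$) was machine-checked in KeYmaera, and then adds one observation --- $\phi_{pf}$ implies $(v=0) \vee \forall \boldsymbol{p}_{\mathcal{B}^\prime} \in \mathcal{B}^\prime \, \big(\|\boldsymbol{p}-\boldsymbol{p}_{\mathcal{B}^\prime}\| > D_s\big)$ --- so the robot never actively collides. Your loop-invariant reconstruction supplies the content hidden behind that citation, and it buys several things the paper never makes explicit: the drive-branch expansion confirms that the paper's $safe$ predicate is exactly the threshold that closes the induction (your expanded terms reproduce the paper's formula verbatim); the Cauchy--Schwarz bound $\dot d \ge -(v+V_{\mathcal{B}^\prime})$ justifies collapsing planar nonholonomic motion to the one-dimensional braking computation; you reconcile the $\|\cdot\|_\infty$ guard with the Euclidean invariant; and your override uses $a=-B$, agreeing with the paper's proof and with the braking distance $\frac{v^2}{2B}$, rather than with the $a=-A$ typo in the paper's definition of the Safe Motion Primitive. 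What the citation route buys in return is assurance: the inductive step is mechanically verified in \cite{mitsch2016formal}, while your real-arithmetic steps remain to be discharged by quantifier elimination, as you acknowledge. Two refinements would tighten your version: state the closing-rate bound for the distance to the set $\mathcal{B}^\prime$ rather than to the tracked sensed point, since the closest point can change identity between sensing updates (the same bound holds, because the set translates with speed at most $V_{\mathcal{B}^\prime}$); and finish, as the paper does, by noting that the invariant forces either $v=0$ or clearance exceeding $D_s$ from every point of the obstacle, which is the ``stops before the collision'' reading of the lemma statement.
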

\begin{proof}
	In \cite{mitsch2016formal}, it was proved with KeYmaera that the translational acceleration $a$ ensures the invariance of $\phi_{pf}$ if $a \in [-B,A]$ if $safe$ holds true, otherwise $a = -B$.
	Moreover, $(v=0) \vee \big(\|\boldsymbol{p} - \boldsymbol{p}_{\mathcal{B}^\prime}^*\| > \frac{v^2}{2B}+V_{\mathcal{B}^\prime}\frac{v}{B}+D_s\big) \rightarrow (v=0) \vee \forall \boldsymbol{p}_{\mathcal{B}^\prime} \in \mathcal{B}^\prime \, \big(\|\boldsymbol{p} - \boldsymbol{p}_{\mathcal{B}^\prime}\| > D_s\big)$; thus, the robot will never actively collide with a moving obstacle.
\end{proof}

\subsection{High-Level Discrete Planner}\label{sec:dplan}

The High-Level Discrete Planner shown in Algorithm \ref{alg:simple} synthesizes a sequence of motion primitives $\boldsymbol{\xi}^K := \langle \boldsymbol{T}, \boldsymbol{Y}, \boldsymbol{\mathcal{P}} \rangle$, which each action $a^{[k]} \in \boldsymbol{T}$ is parametrized by a target configuration $\boldsymbol{y}^{[k]} \in \mathcal{Y}$ and a feasible convex polytope $\mathcal{P}^{[k]} \subseteq \mathcal{W}$ in the workspace. A feasible output space satisfies the LTL$_K$ formula $\phi_{safe}$. Moreover, besides moving, the robot can also realize some tasks that change the environment which are specified in the task language $\mathcal{T}$.

		
		
		

\begin{algorithm}
	\caption{Discrete Planner Algorithm} 
		\label{alg:simple}
	\begin{algorithmic}[1]
		\Statex
		\Function{ITMP}{$\phi , K_{max}$}
		\Let{$K$}{$1$} \Comment{Initialize the horizon.}
		\Let{Satus}{UNSAT} 
		\While{Status = UNSAT $\wedge K \leq K_{max}$}
		
		\Let{(Status, $\boldsymbol{\xi}^K$)}{Check($K,\phi$)}
		\Let{$K$} {$K+1$}
		
		\EndWhile
		
		\State \Return{Status, $\boldsymbol{\xi}^K$}
		\EndFunction
	\end{algorithmic}
\end{algorithm}

The task language is encoded in LTL$_K$ formula $\phi_{\mathcal{T}}$ over state variables $V \cup a$, where the variable $a \in \mathbb{Z}$ is a state variable defining the motion primitive taken at each time instant in the discrete plan. This formula states that the primitive must be always valid ($1 \leq a \leq |A|$) and every time that a primitive is selected, its preconditions and effects must hold true ($\psi_{pre,a_i}(\boldsymbol{v}) \wedge \psi_{eff,a_i}(\boldsymbol{v}, \bigcirc \boldsymbol{v})$). Since at last time instant in the discrete plan we do not select any primitive, we ignore their constraints at this instant, i.e. $\neg Last$.
\begin{equation}
\begin{aligned}
& \begin{aligned} 
\phi_{\mathcal{T}}(\boldsymbol{v}_0,\boldsymbol{v}_f) \equiv & (\boldsymbol{v} = \boldsymbol{v}_0) \wedge \square (Last \rightarrow \boldsymbol{v} = \boldsymbol{v}_f)  \\
& \wedge \square \big[1 \leq a \leq |A|   \wedge \bigwedge_{i \in [1..|A|]} \phi_{\boldsymbol{A}}^{[i]}\big] 
\end{aligned}\\
& \phi_{\boldsymbol{A}}^{[i]} \equiv \big((a = i) \wedge \neg Last\big) \rightarrow \big(\psi_{pre,a_i}(\boldsymbol{v}) \wedge \psi_{eff,a_i}(\boldsymbol{v}, \bigcirc \boldsymbol{v})\big).
\end{aligned}
\end{equation}

Therefore, the task language is encoded in the formula $\phi_{\mathcal{T}}$, and the workspace $\mathcal{W}$ and known static obstacles $\mathcal{B}$ to the formula $\phi_{safe}$.

\begin{thm}	\label{thm:task_spec_synth}
	For a circular shaped robot with dynamic model (\ref{eq:rdm1}), given an initial state $\boldsymbol{x}_0$, a task language $\mathcal{T}$, a workspace $\mathcal{W}$, a set of static obstacles $\mathcal{B}$ and a maximum velocity for moving obstacles $V_{\mathcal{B}^\prime}$, a task and motion plan $\boldsymbol{\xi}^L$ which satisfies $\phi_{safe} \wedge \phi_{\mathcal{T}}$ generates a valid task plan $\boldsymbol{T}_i$. 
\end{thm}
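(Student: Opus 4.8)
The plan is to prove the theorem by composition, assembling the three lemmas of the Motion Primitives subsection to discharge, one at a time, the two conditions of Definition \ref{Def:validtaskplan}. A plan $\boldsymbol{\xi}^L = \langle \boldsymbol{T}, \boldsymbol{Y}, \boldsymbol{\mathcal{P}} \rangle$ that satisfies $\phi_{safe} \wedge \phi_{\mathcal{T}}$ carries two kinds of information: the conjunct $\phi_{\mathcal{T}}$ constrains the symbolic side (initial and final valuations, preconditions and effects of each selected action), while $\phi_{safe}$ constrains the geometric side (that the tunnel $\boldsymbol{\mathcal{P}}$ stays inside $\mathcal{W}$ and clears every $\mathcal{C}$-obstacle). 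I would keep these two threads separate and recombine them at the end.

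First I would establish condition (2) of Definition \ref{Def:validtaskplan}, namely satisfaction of the task language $\mathcal{T}_i$. This reads off directly from the construction of $\phi_{\mathcal{T}}(\boldsymbol{v}_0,\boldsymbol{v}_f)$ under the semantics of Definition \ref{def:ltlK_semantics}: the conjunct $(\boldsymbol{v} = \boldsymbol{v}_0)$ forces $\boldsymbol{v}^{[0]} = \boldsymbol{v}_0$; the conjunct $\square(Last \rightarrow \boldsymbol{v} = \boldsymbol{v}_f)$ together with the $K$-loop assumption forces $\boldsymbol{v}^{[K]} = \boldsymbol{v}_f$; and at each instant $k < K$ the guard $(a = i) \wedge \neg Last$ fires for $i = a^{[k]}$, so the implication $\phi_{\boldsymbol{A}}^{[i]}$ yields exactly $\psi_{pre,a^{[k]}}(\boldsymbol{v}^{[k]}) \wedge \psi_{eff,a^{[k]}}(\boldsymbol{v}^{[k]},\boldsymbol{v}^{[k+1]})$. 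Thus condition (2) holds by a direct semantic reading of $\phi_{\mathcal{T}}$.

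The harder part is condition (1): that the trajectory generated by following $\boldsymbol{T}$ is collision-free and dynamically feasible in the sense of Definition \ref{Def:coldyntraj}. Here I would chain the three lemmas. Satisfaction of $\phi_{safe}$ gives, by the $\phi_{safe}$ collision-free lemma, that each polytope $\mathcal{P}^{[k]}$ lies in $\mathcal{W}$ and is free of the static obstacles $\boldsymbol{\mathcal{B}}$, and that consecutive polytopes overlap, $\mathcal{P}^{[k]} \cap \mathcal{P}^{[k+1]} \neq \emptyset$; this discharges clauses (1) and (2) of Definition \ref{Def:coldyntraj}. Then, inside each obstacle-free convex polytope $\mathcal{P}^{[k]}$, the DWA lemma guarantees a dynamically feasible, collision-free segment from the entry configuration to the target $\boldsymbol{y}^{[k]}$, provided the entry configuration lies in $\mathcal{P}^{[k]}$ and the entry state is safe; this discharges clause (4). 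I would concatenate these segments into a single output trajectory $\boldsymbol{y}_i(t)$, using the nonempty overlap to place the handoff point of each segment inside the next polytope.

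The crux — the step I expect to require the most care — is justifying this concatenation across polytope boundaries while simultaneously handling the moving obstacle $\mathcal{B}^\prime$. The DWA lemma presumes a safe entry state at every segment, and that safety must be maintained globally by the passive-safety lemma. I would argue inductively: assuming $\phi_{pf}$ holds at $\boldsymbol{x}_0$, the Safe Motion Primitive's override mode maintains the invariant $\phi_{pf}$ along the entire execution, so the robot never actively collides with $\mathcal{B}^\prime$ (discharging clause (3)) and remains in a safe state at each polytope transition — which is precisely the hypothesis the DWA lemma needs to continue into $\mathcal{P}^{[k+1]}$. Since every intersection $\mathcal{P}^{[k]} \cap \mathcal{P}^{[k+1]}$ is nonempty and each target $\boldsymbol{y}^{[k]}$ lies in the corresponding polytope, the concatenation is well defined and stays collision-free throughout. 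Having verified all four clauses of Definition \ref{Def:coldyntraj} for clause (1) and the task-language conditions for clause (2), the plan $\boldsymbol{T}_i$ satisfies Definition \ref{Def:validtaskplan} and is therefore valid.
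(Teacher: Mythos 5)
Your proposal is correct and follows essentially the same route as the paper's proof: both discharge condition (2) of Definition \ref{Def:validtaskplan} by reading the semantics of $\phi_{\mathcal{T}}$, and condition (1) by composing the three Motion Primitives lemmas ($\phi_{safe}$ for clauses 1--2, DWA feasibility for clause 4, and the passive-safety supervisor for clause 3 of Definition \ref{Def:coldyntraj}). The only difference is that you spell out the concatenation across overlapping polytopes and the inductive maintenance of safe entry states, which the paper's terser proof leaves implicit.
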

\begin{proof}
$\phi_{safe}$ ensures that a collision free trajectory exists for a robot with dynamic model (\ref{eq:rdm1}) in the workspace $\mathcal{W}$ with static obstacles $\mathcal{B}$, i.e. condition 1, 2 and 4 in Def. \ref{Def:coldyntraj}. Moreover, all primitives implement the minimally invasive safety supervisor; thus, they are passive safe for moving obstacles with a known maximum velocity $V_{\mathcal{B}^\prime}$, i.e. condition 3 and 4 in Def. \ref{Def:coldyntraj}. Finally, $\phi_{\boldsymbol{T}}$ ensures the action-change behaviors, i.e. condition 2 in Def. \ref{Def:validtaskplan}. Therefore, any discrete task and motion plan which satisfies $\phi_{safe} \wedge \phi_{\mathcal{T}}$ will generate a valid task plan.
\end{proof}

\section{Reactive Mission Strategy Synthesis} \label{sec:reactive_mission_synth}
The solution of task and motion planning problem provides a set of safe motion primitives that jointly satisfy a bounded-time task specification.  The mission specification, however,  can be defined over more complex scenario and possibly requires an infinite sequence of bounded-time tasks to be accomplished.  
We consider the  controllable robot team, $R_s \subseteq \mathcal{R}$,   and the environment  robots as $R_e:=\mathcal{R} \backslash R_s$.
The mission planning problem can be characterized to a reactive synthesis formalism by constructing a two-player game arena, denoted as mission graph $G_m$ with having   $R_e$ and $R_s$ as the players.  
A two-player symbolic turned-based game arena is a directed graph which models a reactive system interacting with its environment \cite{moarref2016compositional}.  It  defines over variable set $\mathcal{V}$ 
 that at each turn either system or environment takes an action and updates the vtask space $\mathcal{V}$. 
 The mission graph is   a tuple $G_m=(\mathcal{V},\Gamma,\Delta)$, where $\Gamma$ is finite  set of actions, and $\Delta$   is predicate over $ \mathcal{V} \cup \Gamma \cup  \mathcal{V}'  $ defining the transition relation in $G_m$.  Let's assume there exist an auxiliary variable with domain $\sigma \in \mathcal{V},   \mathcal{D}_\sigma=\{1,2\}$  defines which player's turn is to take an action.  
 Let's denote player-1 as  environment robots $R_e$ with a set of state  $M_e=\{m \in \mathcal{D}_\mathcal{V} |  m_{|\sigma}=1  \}$, and player-2 as system robots $R_s$ with set of states  $M_s=\{m \in \mathcal{D}_\mathcal{V} |  m_{|\sigma}=2  \}$.
Let's also define state set $\mathcal{M}=\mathcal{M}_e \cup \mathcal{M}_s$,
A $run$   $\omega = m_0m_1\dots$ is a sequence of states  $m_i \in \mathcal{M}$   where all pairs $(s_i,a,s^{\prime }_{i+1}) \models \Delta$ for  $i \geq 0$ and $a \in \Gamma$.

A  transition between mission states  $(m,m') \in \mathcal{M}$   is an integrated task and motion problem defined over a task language $\mathcal{T} = \langle \mathcal{V}, \boldsymbol{A}, E, m, m' \rangle$.  Here the aim is to construct an LTL$_K$ task specification $\phi$, over the task language  such that  there exist a valid task and motion plan   that satisfies $\phi$ with  initial task space  $m$  and final task space $m'$. 
Given mission states and the all the corresponding task languages, the mission planning problem is to construct the mission graph,   and then find a  strategy for $R_s$ player,  that satisfies the mission specification. 
\begin{definition} [Strategy]\label{defn:strategy}
 A strategy for player-$\sigma$ is a partial function $\mathcal{S}^\sigma: {\mathcal{D}_{\mathcal{V}}}^*\cdot  \mathcal{D}_{\mathcal{V}}^\sigma \to \Gamma$ such  that  for all prefix of runs ending in player-$\sigma$'s state, $\{r.m^\sigma \in {\mathcal{D}_\mathcal{V}}^*| m^\sigma \in \mathcal{D}^\sigma_\mathcal{V} \}$, if $\mathcal{A}(m^\sigma) \neq \emptyset$, choose a successor state from $\mathcal{A}(m^\sigma)$, that $(m^\sigma,\mathcal{S}^\sigma(r,s), m') \in \Delta$ holds, where $m'$ is a prime of  $m^\sigma$. 
 \end{definition}
  A mission game  with winning condition is  $(G_m,\Psi_{init},\Psi)$, where $\Psi_{init}$ is a predicate over $\mathcal{V}$ specifies  initial state of the game, and $\Psi$ is LTL formula specifies mission specification for  $R_s$.  
Consider $\mathcal{S}^\sigma$ be a strategy for player-$\sigma$, and $m \in \mathcal{M}$, the $outcomes(\mathcal{S}^1,\mathcal{S}^2,m)$ is all the runs in the form of  $m_0 m_1 \cdots$ that $m_0 = m$, and $(m_i,\mathcal{S}^\sigma(m_0 \cdots m_i),m_{i+1}) \in \Delta$. 
A strategy $\mathcal{S}^2$ is winning if for all strategies of player$-1$, $\mathcal{S}^1$, and all the states that satisfies the initial conditions, $m \models \Psi_{init}$, the $outcomes(\mathcal{S}^1,\mathcal{S}^2,m) \subseteq \mathcal{L}(\Psi)$. An LTL specification $\Psi$ is called realizable in $G_m$ with initial condition $\psi_{init}$ if and only if there exist a winning strategy $\mathcal{S}^2$. 
We call the winning strategy $\mathcal{S^2}$ mission strategy. 
The mission strategy synthesis is designed in two steps. 
First, we construct a   mission graph that captures all the feasible transitions over mission states for a known workspace. 
The next step is to find a winning strategy over mission graph that enforces the mission specification.  

\subsection{Mission Graph}
Mission graph is a symbolic  turned-based  two-player game arena $G_m=(\mathcal{V},\Gamma,\Delta)$, defined over task space $\mathcal{V}$.
The  variable states set is  $\mathcal{M}$ with  partitions of  environment state $\mathcal{M}_e=\{m^e_1, \dots, m^e_{|\mathcal{M}_e|} \}$ and system states $\mathcal{M}_s=\{m^s_1, \dots, m^s_{|\mathcal{M}_s|} \}$.  Let's define variable $\alpha \in\{s,e\}$ and $\bar \alpha= \{e,s\} \backslash \alpha$ for simplicity on notations.  
Let $succ(m)  \subseteq \mathcal{M}$ be set of all mission state that  a transition from state $m \in \mathcal{M}$ is feasible.   
The action set  $\Gamma = \Gamma_e \bigcup \Gamma_s$, and the transition relation $\Delta = \Delta_e \bigcup \Delta_s$  can be partitioned to   the environment and system players, where $\Delta_\alpha= \mathcal{V} \bigcup \Gamma_\alpha \bigcup \mathcal{V}'$.
We assume the tuple  $G_e=(\mathcal{V},\Gamma_e,\Delta_e)$  is pre-designed and known to the mission planner.  
The system's action set, $\Gamma_s=\bigcup \gamma_i$, is designed over a  bounded-time task specification $\phi_i$ that at $m^s_i \in \mathcal{M}_s$, can enforce  task space to $m^e_j \in  succ(m^s_i)$. If there exist such $\phi_i$, we use a symbolic representation of it as an action, denoted by a function $symb: \phi \to \gamma$ , and   add transition tuple $(m^s_i,\gamma_i,m^e_j) $ to $\Delta_s$. This procedure is presented in Algorithm \ref{alg:mission_graph}. 


\begin{algorithm} [!h]
	\caption{Mission Graph Synthesis Algorithm} 
		\label{alg:mission_graph}
	\begin{algorithmic}[1]
		\Require{$\mathcal{M}_s,G_e,\mathcal{W},\boldsymbol{\mathcal{CB}},K_{max}$}
		\Ensure{ $G_m=(\mathcal{V},\Gamma,\Delta)$}
		\Let{$i,j$}{$1$} \Comment{Initialize the counter.}
		\While{$i  < |\mathcal{M}_s|$}
           \While{$j \le |\mathcal{M}_e|$}
				 \Let {$\phi_k$} {$\phi_{safe}(\mathcal{W},\boldsymbol{\mathcal{CB}}) \wedge \phi_{\mathcal{T}}(m^s_i,m^e_j)$}
				\Let {$(Status, \boldsymbol{\xi}^K)$} {$ITMP(\phi_k, K_{max})$ }
              \If {$Status =$ SAT} 
                	\State $\gamma_i:=Symbl(\phi_k)$  \Comment{symbolic action for $\phi_i$.}
               		\State $\Gamma_s \gets \Gamma_s \cup \gamma_i$
                	\State  $\Delta \gets \Delta \cup (m_i,\gamma_i,m_j)$
             \EndIf
			\Let{$j$} {$j+1$}
			\EndWhile
            \Let{$i$} {$i+1$}
		\EndWhile
		
		\State \Return{ $G_m=(\mathcal{V},\Gamma_s \bigcup \Gamma_e,\Delta \bigcup 
        \Delta_e )$}
	\end{algorithmic}
\end{algorithm}
The synthesized $G_m$ has a deterministic system's transition function $\Delta_S$,  since for any $m^s \in \mathcal{M}_s$ and  all $m^e \in succ(m_k^s)$,  the corresponding  task specification, $\phi_k:=\phi_{safe} \wedge \phi_\mathcal{T}(m^s,m^e)$, if exist,  is unique. The environment transition function, $\Delta_e$, however can be non-deterministic.

\subsection{Mission Strategy Synthesis}
Construction of $G_m$  is a bottom-up approach that provides an abstracted model of all robots behavior for the mission planning layer.   The mission planning, on the other hand, is top-down design over mission graph that synthesis a finite mission strategy model that satisfies a given mission specification. 
Synthesis of mission strategy can be converted to a standard problem of solving symbolic two-players game \cite{moarref2016compositional}.   

\begin{prop} [Existence of Mission Strategy \cite{moarref2016compositional}] \label{prop:existence_mission_strategy}
Given initial configuration of task space  described  by predicate $\Psi_{init}$ over task space $\mathcal{V}$, and let mission specification $\Psi$ be an LTL formula defined  over mission state $\mathcal{M}$,   mission strategy $\mathcal{S}$ exist if   the system player  has a winning strategy in  $(G_m,\Psi_{init},\Psi)$ . 
\end{prop}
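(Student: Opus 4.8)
The plan is to show that the mission planning problem is an instance of the standard symbolic two-player game synthesis problem, so that the existence of a mission strategy follows directly from the game-solving machinery of \cite{moarref2016compositional}. Recall that, by the construction in Algorithm \ref{alg:mission_graph}, $G_m=(\mathcal{V},\Gamma,\Delta)$ is a turn-based two-player arena whose state set $\mathcal{M}$ is partitioned into the system states $\mathcal{M}_s$ and the environment states $\mathcal{M}_e$ through the auxiliary turn variable $\sigma$, and whose transition relation splits as $\Delta=\Delta_s\cup\Delta_e$. By the definition preceding the statement, a mission strategy is \emph{by construction} nothing other than a winning strategy $\mathcal{S}^2$ for the system player in the game $(G_m,\Psi_{init},\Psi)$; hence the two notions coincide, and it suffices to certify that this game falls within the class solved in \cite{moarref2016compositional} and then instantiate that result.

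First I would verify that $G_m$ is a well-formed game arena: every state belongs to exactly one player because $\sigma$ is single-valued on each $m\in\mathcal{M}$, and every system transition produced by Algorithm \ref{alg:mission_graph} is added to $\Delta_s$ only when $ITMP$ returns $SAT$, so each edge $(m^s_i,\gamma_i,m^e_j)\in\Delta_s$ is backed by a feasible plan, while the environment part $G_e=(\mathcal{V},\Gamma_e,\Delta_e)$ is assumed given. Second, I would confirm that the winning condition is a bona fide $\omega$-regular objective: since $\Psi$ is an LTL formula over the finite set of mission states $\mathcal{M}$ and $\Psi_{init}$ is a predicate over $\mathcal{V}$, every run $\omega=m_0m_1\cdots$ of $G_m$ induces a word over $\mathcal{M}$ whose membership in $\mathcal{L}(\Psi)$ is well defined. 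Third, I would invoke the synthesis result of \cite{moarref2016compositional}: for a finite symbolic two-player game with an LTL winning condition, a winning strategy for the system player can be synthesized whenever one exists. Applying this to $(G_m,\Psi_{init},\Psi)$, the hypothesis that the system player has a winning strategy yields an $\mathcal{S}^2$ with $outcomes(\mathcal{S}^1,\mathcal{S}^2,m)\subseteq\mathcal{L}(\Psi)$ for every adversary strategy $\mathcal{S}^1$ and every $m\models\Psi_{init}$, which is exactly the mission strategy $\mathcal{S}$ sought.

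The main obstacle, and the only place requiring genuine care, is the soundness of the abstraction that connects the symbolic system moves $\gamma\in\Gamma_s$ to the concrete task-and-motion plans they represent: I must argue that the symbolic label $\gamma_i=Symbl(\phi_k)$ faithfully encodes an underlying plan, so that executing the synthesized $\mathcal{S}^2$ at the mission level actually corresponds to realizable, collision-free, dynamically-feasible behavior at the motion level. This follows because each edge of $\Delta_s$ is inserted precisely when $\phi_{safe}\wedge\phi_{\mathcal{T}}$ is satisfiable, and hence, by Theorem \ref{thm:task_spec_synth}, yields a valid task plan. Once this correspondence between runs of $G_m$ and concrete executions is in place, the remainder is a direct instantiation of the cited two-player game result and the proposition follows.
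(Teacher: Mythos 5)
Your proposal is correct and takes essentially the same route as the paper, which states this proposition as an import from \cite{moarref2016compositional} and offers no proof of its own: exactly as you observe, a mission strategy is \emph{by definition} a winning strategy $\mathcal{S}^2$ for the system player in $(G_m,\Psi_{init},\Psi)$, so existence reduces immediately to the cited symbolic two-player game synthesis result once $G_m$ is recognized as a well-formed turn-based arena with an LTL objective. Your final paragraph on the soundness of the symbolic-to-concrete abstraction is not actually needed here---that concern is the content of Theorem \ref{thm:correct_mission_strategy} (Correctness of Mission Strategy), not of this existence statement---but including it does not harm the argument.
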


If there exist a mission strategy,  a finite-state transducer can be synthesized  that accepts all the runs induced by $\mathcal{S}$  \cite{vardi1996automata}. Let $G^m_\mathcal{S}$ be mission strategy  transducer
that satisfies the mission specification,   $\mathcal{L}(G^m_\mathcal{S}) \subseteq \mathcal{L}(\Psi)$, it is important that the induced motion trajectory from $G^m_\mathcal{S}$
be feasible in the workspace.  

\begin{thm} [Correctness of  Mission Strategy] \label{thm:correct_mission_strategy}
$\mathcal{L}(G^m_\mathcal{S}) $  induces  collision-free and dynamically-feasible trajectories.  
\end{thm}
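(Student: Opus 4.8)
The plan is to reduce the correctness of the mission strategy to the per-transition guarantee already established in Theorem~\ref{thm:task_spec_synth}, and then argue that these guarantees compose along every run accepted by $G^m_\mathcal{S}$. First I would fix an arbitrary word $\boldsymbol{\rho} \in \mathcal{L}(G^m_\mathcal{S})$ and its underlying run $\omega = m_0 m_1 m_2 \cdots$ with $m_0 \models \Psi_{init}$, noting that every consecutive triple $(m_i, \gamma, m_{i+1})$ satisfies $\Delta$ and that the controllable team $R_s$ executes a motion only when the traversed transition is a system move, i.e. $(m^s_i, \gamma_i, m^e_j) \in \Delta_s$; environment moves update only the uncontrolled part of the task space.

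Next I would exploit the way the mission graph was built in Algorithm~\ref{alg:mission_graph}: a system action $\gamma_i$ is inserted into $\Delta_s$ only after $ITMP(\phi_k, K_{max})$ returns SAT for $\phi_k = \phi_{safe}(\mathcal{W}, \boldsymbol{\mathcal{CB}}) \wedge \phi_{\mathcal{T}}(m^s_i, m^e_j)$, producing a witnessing plan $\boldsymbol{\xi}^K = \langle \boldsymbol{T}, \boldsymbol{Y}, \boldsymbol{\mathcal{P}} \rangle$. Thus every system edge traversed by $\omega$ carries a plan satisfying $\phi_{safe} \wedge \phi_{\mathcal{T}}$. Applying Theorem~\ref{thm:task_spec_synth} edge-by-edge yields a valid task plan, so by Definition~\ref{Def:validtaskplan} the associated output trajectory $\boldsymbol{y}_i(t)$ satisfies all four conditions of Definition~\ref{Def:coldyntraj}: it stays in $\mathcal{W}$, avoids the static $\mathcal{C}$-obstacles, remains passively safe against any moving obstacle (including the environment robots $R_e$) up to velocity $V_{\mathcal{B}^\prime}$ via the safe motion primitive supervisor, and is dynamically feasible for~(\ref{eq:rdm1}).

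The remaining work is compositional. I would show that the boundary conditions chain correctly: the task specification on the edge $(m^s_i, \gamma_i, m^e_j)$ is posed with $\boldsymbol{v}_0 = m^s_i$ and $\boldsymbol{v}_f = m^e_j$, so the terminal configuration of one segment coincides with the initial configuration required by the next system action along $\omega$, the intervening environment moves leaving the controllable robot at rest at $m^e_j$. Concatenating the per-segment trajectories then produces a single continuous, collision-free, dynamically-feasible trajectory for the (possibly infinite) run, and since $\boldsymbol{\rho}$ was arbitrary the property holds for all of $\mathcal{L}(G^m_\mathcal{S})$; because $\mathcal{L}(G^m_\mathcal{S}) \subseteq \mathcal{L}(\Psi)$, these trajectories additionally fulfill the mission.

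The main obstacle I anticipate is precisely this composition step across an unbounded horizon: Theorem~\ref{thm:task_spec_synth} is a bounded, per-task statement, so care is needed to argue that (a) matching terminal and initial configurations lets the finite trajectory segments be stitched together without violating feasibility or safety at the junctions, and (b) passive safety against the \emph{moving} environment robots persists through the infinite interleaving of system and environment turns, not merely within a single bounded task. This reactive-safety-through-composition argument, leaning on the minimally invasive supervisor that guarantees the robot stops before any collision regardless of the other agents' motions, is the crux, whereas the reduction to Theorem~\ref{thm:task_spec_synth} via Algorithm~\ref{alg:mission_graph} is routine.
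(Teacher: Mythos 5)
Your proposal is correct and follows essentially the same route as the paper's proof: reduce correctness to the per-transition guarantee by observing that Algorithm~\ref{alg:mission_graph} only inserts a system edge $(m^s_i,\gamma_i,m^e_j)$ into $\Delta_s$ after ITMP returns SAT for $\phi_{safe}\wedge\phi_{\mathcal{T}}(m^s_i,m^e_j)$, and then invoke Theorem~\ref{thm:task_spec_synth} and Definition~\ref{Def:validtaskplan} edge-by-edge. The difference is that you carry the argument further than the paper does. The paper's proof consists only of what you call the ``routine'' part: it picks consecutive states $m=\omega_i$, $m'=\omega_{i+1}$ of a run, notes that $m'\in succ(m)$ implies a valid task plan exists by the construction of $G_m$, and concludes directly --- it never distinguishes system moves from environment moves (even though Algorithm~\ref{alg:mission_graph} provides witnessing plans only for $\Delta_s$, not for the pre-designed $\Delta_e$), and it says nothing about stitching the bounded segments together, matching terminal and initial configurations, or why passive safety persists across the infinite interleaving of turns. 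Those are precisely the points you flag as the crux. So your proposal is, if anything, more complete than the paper's own argument: the compositional step you identify as needing care is simply left implicit in the paper, and your explicit treatment of it (boundary-condition chaining plus the minimally invasive supervisor guaranteeing the robot stops before any collision on every segment) is what a rigorous version of this theorem would require.
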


\begin{proof}
We proof by construction that all the runs over mission states that induced by strategy model $G^m_\mathcal{S}$ generates  collision-free and dynamically-feasible trajectories as it is defined in Def \ref{Def:coldyntraj}. 
Let a run be $\omega \in \mathcal{L}(G^m_\mathcal{S})$, mission states   $m=\omega_i$ and $m'=\omega_{i+1} $, for $i \ge 0$ and $\{m,m'\} \in \mathcal{M} $. We can define   a task language $\mathcal{T}=\langle \mathcal{V}, \boldsymbol{A}, E, m, m' \rangle$ based on  Def. \ref{Def:validtaskplan} and construct an LTL$_K$ task specification by using    Theorem  \ref{thm:task_spec_synth},  as  $\phi=\phi_{safe} \wedge \phi_\mathcal{T}(m,m')$. Since  $m' \in succ(m)$ and  by definition of  $G_m$ given in Algorithm \ref{alg:mission_graph}, we know there exist a valid task plan that it's integrated task and motion plan, $\boldsymbol{\xi}^L$,  satisfies $\phi$. Hence, by Def. \ref{Def:validtaskplan} the induced trajectories are collision-free and dynamically-feasible.
\end{proof}

	
	
	

\section{Simulation} \label{sec:simulation}

We present results of our approach in simulation to evaluate an automatic synthesis of controllers for a multi-robotic system in a dynamic environment with a moving obstacle. The synthesis of
task and motion planning  is encoded to an SMT problem and solved with Z3 \cite{de2008z3} for Pioneer P3-DX Robots \footnote{\url{http://www.mobilerobots.com/ResearchRobots/PioneerP3DX.aspx}} equipped with gripper 
and a laser sensor. We simulate the execution of the generated controllers with MobileSim \footnote{\url{http://robots.mobilerobots.com/MobileSim/download/current/README.html}}.


Our evaluation is illustrated in a warehouse scenario presented in Sec. \ref{SEC:motivating} where a set of location of objects $L = \{ W_0, \dots, W_4 \}$ is given as in Fig. \ref{fig:sim1}. A task and motion plan is generated offline for all $(m_i^s,\lambda_i,m_j^e) \in \Delta_s$. For example, Fig. \ref{fig:sim2}, \ref{fig:sim3} and \ref{fig:sim4} shows a plan for moving an object in the scanning area $W_1$ to the transportation area $W_4$. The blue line is a continuous output trajectory $\boldsymbol{y}(t)$ for the robot $R_1$ which is generated at runtime considering the laser sensors such that ensures a task and motion plan synthesized offline and no collision with a priori unknown obstacles moving up to $0.8 m/s$. This robot crosses with the robot $R_3$ trajectory, the green line in Fig. \ref{fig:sim3} and \ref{fig:sim4}, while moving to drop the object off and returning home. At this moment, this robot automatically finds a collision-safe detour. Hence, the designed controller is reactive not only in the mission layer but also in the motion layer. Consequently, this controller is robust to unexpected changes in the environment. 

\begin{figure}
	\begin{subfigure}{.24\textwidth}
		\centering
		\includegraphics[width=\textwidth]{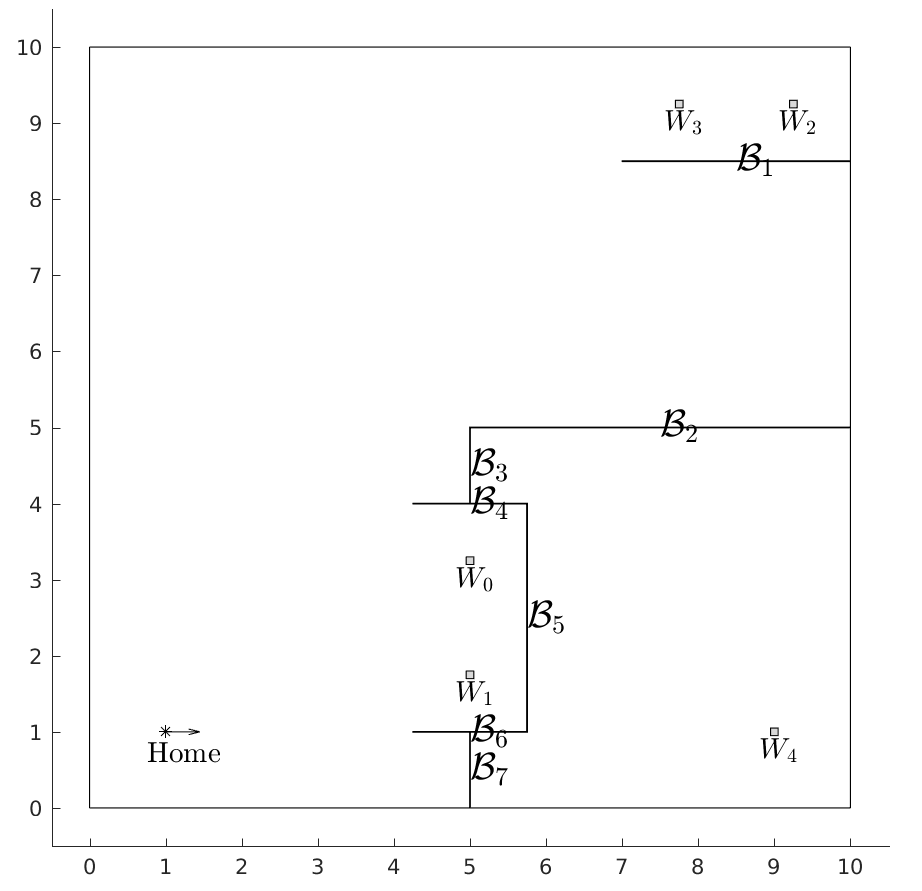}
		\caption{Pre-defined locations.}
		\label{fig:sim1}
	\end{subfigure}
		\begin{subfigure}{.24\textwidth}
			\centering
			\includegraphics[width=\textwidth]{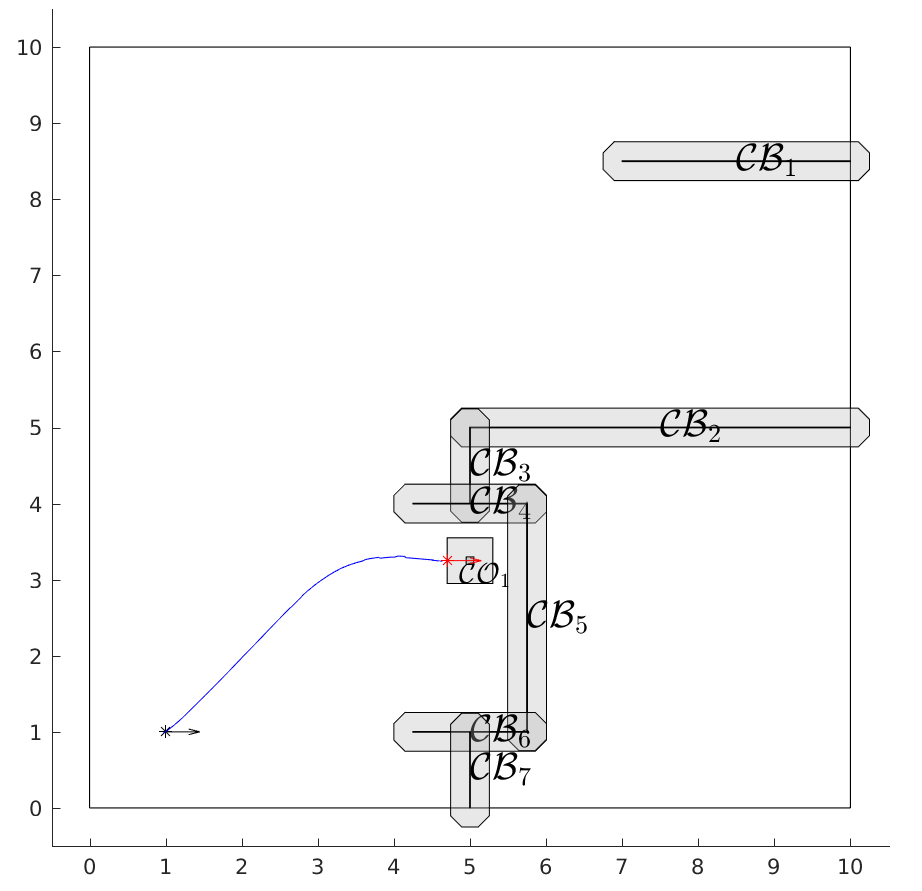}
			\caption{Pick Up}
			\label{fig:sim2}
		\end{subfigure}\\
	\begin{subfigure}{.24\textwidth}
		\centering
		\includegraphics[width=\textwidth]{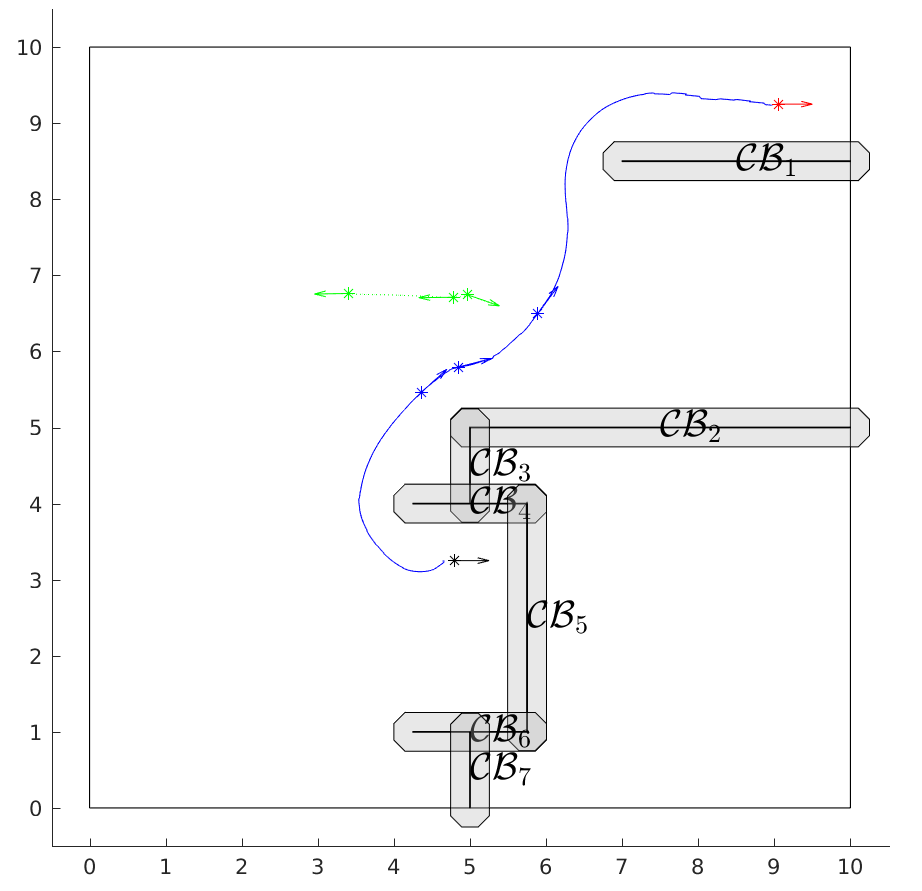}
		\caption{Drop Off}
		\label{fig:sim3}
	\end{subfigure}
		\begin{subfigure}{.24\textwidth}
			\centering
			\includegraphics[width=\textwidth]{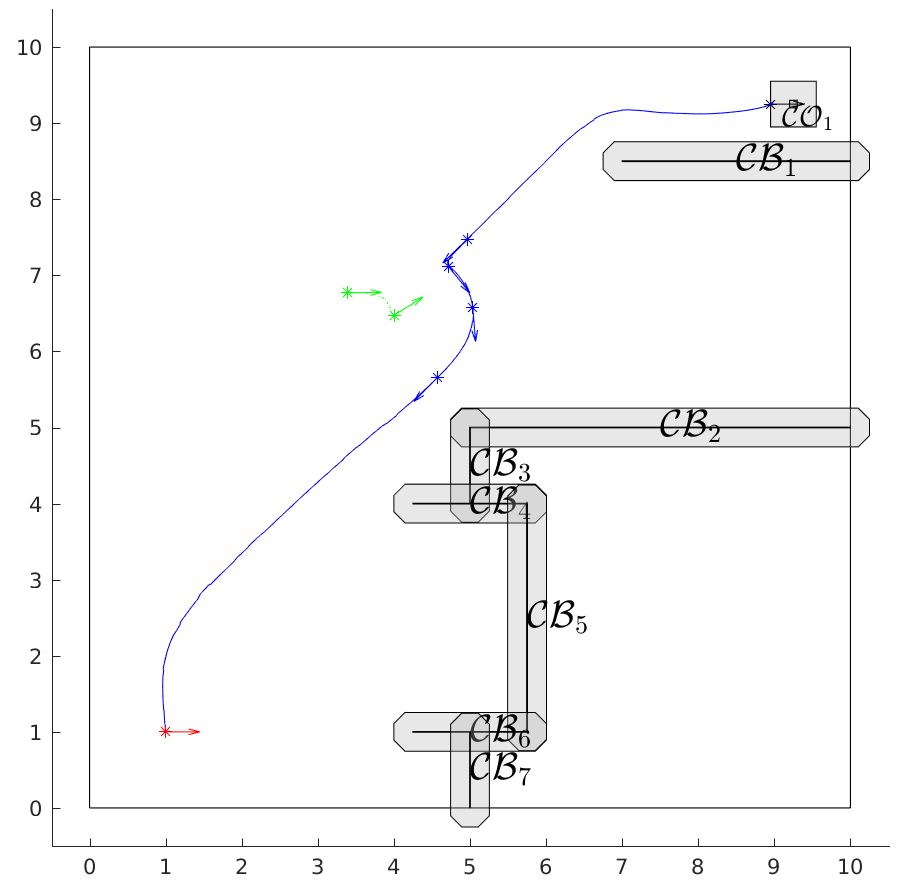}
			\caption{Go Home}
			\label{fig:sim4}
		\end{subfigure}\\
    \caption{An execution of a task and motion plan to fetch an object at $W_0$ to $W_2$.}
    \label{fig:sim1-4}
\end{figure}

\section{Conclusion}\label{SEC:CON}
In this paper, we addressed integrated mission and motion planning problem for robotic manipulator systems operating in a partially known environment. 
The main advantages of the proposed approach are in two-folds:  a) The high-level mission specification can be defined declaratively on desired robots and objects locations. The framework synthesis the required robot task to move the objects according to the mission states; b) The framework offers reactivity in both motion planning layer to handle  unknown moving obstacles, and in the mission planning to adjust the robot's strategy with respect to change of environment enforced by others robots operating in the workspace.


\bibliographystyle{IEEEtran}        
\bibliography{ACC_2018V2}

\end{document}